\documentclass[letterpaper, 10pt, conference]{ieeeconf}

\IEEEoverridecommandlockouts

\usepackage{siunitx}
\usepackage{relsize}
\usepackage{ifthen}
\usepackage[colorinlistoftodos]{todonotes}

\usepackage[caption=false]{subfig}

\usepackage{graphics} 
\usepackage{rotating}
\usepackage{color}
\usepackage{enumerate}
\usepackage[T1]{fontenc}
\usepackage{psfrag}
\usepackage{epsfig} 
\usepackage{booktabs}
\usepackage{graphicx,url}
\usepackage{multirow}
\usepackage{array}
\usepackage{latexsym}
\usepackage{amsfonts}
\usepackage{amsmath}
\usepackage{amssymb}

\usepackage{amsthm}
\usepackage{mathtools}
\usepackage{xstring}
\usepackage{algorithm} 
\usepackage[noEnd=true, indLines=true, commentColor=NavyBlue, endLComment=~, beginComment=//~]{algpseudocodex} 
\usepackage{xpatch}
\usepackage{multirow}
\usepackage{xcolor}
\usepackage[dvipsnames]{xcolor}
\usepackage{prettyref}
\usepackage{flexisym}
\usepackage{bigdelim}
\usepackage{listings}
\let\labelindent\relax
\usepackage{enumitem}
\usepackage{xspace}
\usepackage{bm}
\graphicspath{{./figures/}}
\usepackage{tikz}
\usetikzlibrary{matrix,calc}
\usepackage[acronym]{glossaries}
\usepackage[bookmarks=true]{hyperref}
\usepackage[capitalize]{cleveref}
\usepackage{subcaption}
\usepackage{aliascnt}

\usepackage{mdwlist}

\let\itemize\undefined
\makecompactlist{itemize}{stditemize}

\newrefformat{prob}{Problem\,\ref{#1}}
\newrefformat{def}{Definition\,\ref{#1}}
\newrefformat{sec}{Section\,\ref{#1}}
\newrefformat{sub}{Section\,\ref{#1}}
\newrefformat{prop}{Proposition\,\ref{#1}}
\newrefformat{app}{Appendix\,\ref{#1}}
\newrefformat{alg}{Algorithm\,\ref{#1}}
\newrefformat{cor}{Corollary\,\ref{#1}}
\newrefformat{thm}{Theorem\,\ref{#1}}
\newrefformat{lem}{Lemma\,\ref{#1}}
\newrefformat{fig}{Fig.\,\ref{#1}}
\newrefformat{tab}{Table\,\ref{#1}}

\theoremstyle{definition}

\theoremstyle{definition}
\newtheorem{problem}{Problem} 
\crefname{problem}{Problem}{Problems}

\crefname{trule}{Rule}{Rules}

\newtheorem{corollary}{Corollary}[section]
\crefname{corollary}{Corollary}{Corollaries}

\crefname{conjecture}{Conjecture}{Conjectures}

\newtheorem{lemma}{Lemma}[section]
\crefname{lemma}{Lemma}{Lemmas}

\crefname{assumption}{Assumption}{Assumptions}

\newtheorem{definition}{Definition}[section]
\crefname{definition}{Definition}{Definitions}

\crefname{proposition}{Proposition}{Propositions}

\newtheorem{remark}{Remark}[section]
\crefname{remark}{Remark}{Remarks}

\crefname{example}{Example}{Examples}

\newcommand{\bdmath}{\begin{dmath}}
\newcommand{\edmath}{\end{dmath}}
\newcommand{\beq}{\begin{equation}}
\newcommand{\eeq}{\end{equation}}
\newcommand{\bdm}{\begin{displaymath}}
\newcommand{\edm}{\end{displaymath}}
\newcommand{\bea}{\begin{eqnarray}}
\newcommand{\eea}{\end{eqnarray}}
\newcommand{\beal}{\beq \begin{array}{ll}}
\newcommand{\eeal}{\end{array} \eeq}
\newcommand{\beas}{\begin{eqnarray*}}
\newcommand{\eeas}{\end{eqnarray*}}
\newcommand{\ba}{\begin{array}}
\newcommand{\ea}{\end{array}}
\newcommand{\bit}{\begin{itemize}}
\newcommand{\eit}{\end{itemize}}
\newcommand{\ben}{\begin{enumerate}}
\newcommand{\een}{\end{enumerate}}

\newcommand{\calB}{{\cal B}}
\newcommand{\calC}{{\cal C}}

\newcommand{\calG}{{\cal G}}

\newcommand{\calI}{{\cal I}}

\newcommand{\calR}{{\cal R}}
\newcommand{\calS}{{\cal S}}
\newcommand{\calT}{{\cal T}}

\renewcommand{\boldsymbol}[1]{{\bm #1}}

\newcommand{\hide}[1]{}

\newcommand{\hiddenText}{{\color{gray} hidden text.}}
\newcommand{\hideWithText}[1]{\hiddenText}

\newcommand{\vtau}{\boldsymbol{\tau}}

\newcommand{\blue}[1]{{\color{blue}#1}}

\newcommand{\linkToPdf}[1]{\href{#1}{\blue{(pdf)}}}
\newcommand{\linkToPpt}[1]{\href{#1}{\blue{(ppt)}}}
\newcommand{\linkToCode}[1]{\href{#1}{\blue{(code)}}}
\newcommand{\linkToWeb}[1]{\href{#1}{\blue{(web)}}}
\newcommand{\linkToVideo}[1]{\href{#1}{\blue{(video)}}}
\newcommand{\linkToMedia}[1]{\href{#1}{\blue{(media)}}}
\newcommand{\award}[1]{\xspace} 

\newcommand{\myparagraph}[1]{\noindent\textbf{#1}}
\newcommand*\circled[1]{\tikz[baseline=(char.base)]{
            \node[shape=circle,draw,inner sep=0.5pt] (char) {#1};}}

\newcommand{\traj}{\tau}

\newcommand{\cB}{\mathcal{B}}

\newcommand{\cI}{\mathcal{I}}

\newcommand{\bmat}{\left[ \begin{array}}
\newcommand{\emat}{\end{array}\right]}

\newcommand{\tup}[1]{\left( #1\right)}

\newacronym{acr:mapf}{MAPF}{Multi-Agent Path Finding}
\newacronym{acr:fico}{FICO}{Finite-Horizon Closed-Loop Factorization}
\newacronym{acr:soc}{SOC}{Sum of Cost}
\newacronym{acr:mpc}{MPC}{Model Predictive Control}
\newacronym{acr:rl}{RL}{Reinforcement Learning}
\newacronym{acr:ert}{ERT}{Execution Response Time}
\newacronym{acr:accbs}{ACCBS}{Anytime Closed-Loop Conflict-Based Search}
\newacronym{acr:cbs}{CBS}{Conflict-Based Search}
\newacronym{acr:mapflh}{MAPF-LH}{Multi-Agent Path Finding with Logical Heterogeneity}
\newacronym{acr:daccbs}{CDCBS}{Certificate-Driven Conflict-Based Search}

\usepackage{ifthen}
\newboolean{anonymous}
\setboolean{anonymous}{false}
\usepackage{changes}
\usepackage{cite}
\usepackage[font=footnotesize]{caption}

\renewcommand{\baselinestretch}{0.98}

\begin{document}

\title{\LARGE \bf
Certificate-Driven Closed-Loop Multi-Agent Path Finding \\ with Inheritable Factorization
}

\ifthenelse{\boolean{anonymous}}{
  \author{Anonymous Authors}
}{
\author{Jiarui Li,
        Runyu Zhang,
        Gioele Zardini
\thanks{The authors are with the Laboratory for Information and Decision Systems, Massachusetts Institute of Technology, Cambridge, MA, USA (e-mails: \{jiarui01, runyuzha, gzardini\}@mit.edu).}
\thanks{This work was supported by Prof. Zardini's grant from the MIT Amazon Science Hub, hosted in the Schwarzman College of Computing. Runyu Zhang was supported by the MIT Postdoctoral Fellowship
Program for Engineering Excellence.}
}
}

\maketitle
\thispagestyle{empty}
\pagestyle{empty}

\begin{abstract}
Multi-agent coordination in automated warehouses and logistics is commonly modeled as the \gls{acr:mapf} problem.
Closed-loop \gls{acr:mapf} algorithms improve scalability by planning only the next movement and replanning online, but this finite-horizon viewpoint can be shortsighted and makes it difficult to preserve global guarantees and exploit compositional structure.
This issue is especially visible in \gls{acr:accbs}, which applies \gls{acr:cbs} over dynamically extended finite horizons but, under finite computational budgets, may terminate with short active prefixes in dense instances.

We introduce {certificate trajectories} and their associated {fleet budget} as a general mechanism for filtering closed-loop updates.
A certificate provides a conflict-free fallback plan and a monotone upper bound on the remaining cost; accepting only certificate-improving updates yields completeness. The same budget information induces a {budget-limited factorization} that enables global, inheritable decomposition across timesteps.
Instantiating the framework on \gls{acr:accbs} yields \gls{acr:daccbs}.
Experiments on benchmark maps show that \gls{acr:daccbs} achieves more consistent solution quality than \gls{acr:accbs}, particularly in dense settings, while the proposed factorization reduces effective group size.
\end{abstract}

\section{Introduction}
Multi-agent coordination is central to large-scale automated warehouses and fulfillment centers, which increasingly underpin global supply chains, e-commerce, and manufacturing~\cite{d2012guest}.
\gls{acr:mapf}, in which agents move from prescribed starts to goals on a shared graph while avoiding collisions, is a canonical abstraction of this coordination problem~\cite{stern2019mapf}.
As a result, \gls{acr:mapf} has become a core problem at the intersection of robotics and AI~\cite{wurman2008coordinating,standley2010finding,shaoul2024accelerating,wang2025lns2+,paul2022multi}.

A central challenge in \gls{acr:mapf} is the trade-off between scalability and guarantees.
Solving \gls{acr:mapf} optimally, or even with strong performance guarantees, is NP-hard~\cite{yu2013planning}.
Existing approaches therefore split broadly into two classes.
One line of work emphasizes scalability, often at the expense of guarantees, using heuristic or approximate methods such as CA*~\cite{silver2005cooperative}, OD~\cite{standley2010finding}, MAPF-LNS2~\cite{li2022mapf}, LaCAM~\cite{okumura2023lacam}, and their anytime variants~\cite{li2021anytime,okumura2023improving,okumura2023engineering,gandotra2025anytime}.
Another line of work provides strong optimality or bounded-suboptimality guarantees, including \gls{acr:cbs}~\cite{sharon2015conflict} and its variants~\cite{boyarski2015icbs,barer2014suboptimal,li2021eecbs}, as well as BCP~\cite{lam2022branch}, ICTS~\cite{sharon2013increasing}, and M*~\cite{wagner2015subdimensional}.
These guaranteed methods, however, often scale poorly because they must resolve conflicts over full joint trajectories: new conflicts can arise while old ones are being resolved, and conflicts that occur far in the future still need to be handled explicitly.

Closed-loop \gls{acr:mapf} offers a different compromise.
Instead of planning an entire joint trajectory at once, the controller outputs only the next movement and replans online~\cite{li2025fico,li2026adaptive,li2021lifelong}.
This relaxes the need to optimize over the full horizon and naturally motivates finite-horizon designs, which defer later conflicts and can make certifiable planning methods such as \gls{acr:cbs} applicable to larger fleets.
In particular, our recent \gls{acr:accbs} algorithm~\cite{li2026adaptive} applies \gls{acr:cbs} over an active prefix and progressively extends the planning horizon within the available per-step computational budget.

However, finite-horizon reasoning introduces a new difficulty.
In dense instances, where conflict resolution is intrinsically harder, \gls{acr:accbs} often times out at short horizons under realistic computation budgets.
The resulting shortsightedness can degrade solution quality and make the closed-loop behavior unstable.
At the same time, because future interactions beyond the current horizon remain weakly constrained, the underlying compositionality of the fleet is difficult to exploit in a way that persists across timesteps.
Existing groupings in finite-horizon closed-loop planning are therefore often local and must be recomputed repeatedly.

This paper addresses both issues with a certificate-based framework for closed-loop \gls{acr:mapf}.
At each timestep, we maintain a set of \emph{certificate trajectories} from the current state to the goals, together with their cost, called the \emph{fleet budget}.
The certificate serves as a feasible incumbent plan: candidate closed-loop updates are accepted only if they improve the current certificate.
This restores a global notion of progress to finite-horizon closed-loop planning.
Moreover, the fleet budget constrains how far future motion can deviate from individual shortest paths, which in turn enables a \emph{budget-limited factorization} of the fleet.
The resulting factorization is global and inheritable across timesteps.

\subsection{Statement of Contribution}
This paper makes three contributions.
First, we introduce certificate trajectories and the fleet budget as a general mechanism for filtering closed-loop \gls{acr:mapf} updates.
The certificate provides a conflict-free fallback plan at every timestep, and the monotone decrease of the fleet budget yields a concise completeness argument.
Second, we develop \emph{budget-limited factorization}.
By using the fleet budget to define budget-limited reachable regions, we derive a factorization whose independence relations are global and inheritable across timesteps, thereby enabling closed-loop planning to exploit compositional structure more effectively.
Third, we instantiate the framework on \gls{acr:accbs}, yielding \gls{acr:daccbs}.
In \gls{acr:daccbs}, conflict-free active prefixes found by the finite-horizon \gls{acr:cbs} search are completed with a conflict-free backup rollout to form certificate candidates, and the incumbent certificate is updated only when the candidate has lower cost.
Experiments on benchmark maps show that \gls{acr:daccbs} is substantially more robust than \gls{acr:accbs} in dense settings, while the proposed factorization can strongly reduce effective group size.

Although this paper focuses on \gls{acr:accbs}, the certificate-based framework itself is more general and may be applicable to other closed-loop \gls{acr:mapf} algorithms as well.

\section{Problem Definition and Preliminaries} \label{sec:preliminary}

\subsection{\gls{acr:mapf}}
We first introduce the notation used throughout the paper.
An \gls{acr:mapf} problem is specified by an \emph{\gls{acr:mapf} instance}.

\begin{definition}[\gls{acr:mapf} instance]
\label{def:mapf-inst}
An \gls{acr:mapf} \emph{instance} is a tuple~$\cI=\tup{G,A,\rho_s, \rho_g}$ where~$G=\tup{V,E}$ is a directed reflexive graph,~$A=\{a_1, \ldots, a_N\}$ is a set of~$N$ agents, and~$\rho_s\colon A\to V$ and~$\rho_g\colon A\to V$ map each agent~$a\in A$ to its start vertex~$s_a=\rho_s(a)\in V$ and goal vertex~$g_a=\rho_g(a)\in V$, respectively.
\end{definition}

\begin{definition}[Trajectories and conflicts]
Let~$a_i\in A$ be an agent.
A \emph{trajectory} of~$a_i$, denoted~$\traj^{a_i}=[v^{a_i}_0,v^{a_i}_1,\ldots,v^{a_i}_{T_i}]$, is a finite sequence of vertices such that~$\tup{v^{a_i}_t,v^{a_i}_{t+1}}\in E$ for all~$t\in\{0,\ldots,T_i-1\}$.
Given two trajectories~$\traj^{a_i}=[v^{a_i}_0,\ldots,v^{a_i}_T]$ and~$\traj^{a_j}=[v^{a_j}_0,\ldots,v^{a_j}_T]$ of equal length $T$, we define:
a) \emph{Vertex Conflict}:~$\exists t\in \{0,\ldots,T\}$ such that~$v^{a_i}_t = v^{a_j}_t$;
b) \emph{Edge Conflict}:~$\exists t\in \{0,\ldots,T\}$ such that~$v^{a_i}_t = v^{a_j}_{t-1}$ and $v^{a_i}_{t-1} = v^{a_j}_t$.
Two trajectories are \emph{conflict-free} if they exhibit neither vertex conflicts nor edge conflicts.
\end{definition}

We use $\vtau^A$ to denote the collection of the trajectories for all agents in $A$\footnote{
When referring to a joint collection of trajectories, we implicitly pad shorter trajectories with wait actions at their terminal vertices so that all trajectories share a common makespan (possible because $G$ is reflexive).
}.
Our recent work~\cite{li2025fico} introduced the \emph{\gls{acr:mapf} system}, a system-level framework that integrates planning and execution within a single feedback loop and unifies different \gls{acr:mapf} variants.
Within this framework, the \gls{acr:mapf} problem can be cast as a controller-synthesis problem.

\begin{definition}[\gls{acr:mapf} system and the unified \gls{acr:mapf} problem] \label{def:mapf-system}
Let~$\calI = \tup{G, A, \rho_s, \rho_g}$ be the \gls{acr:mapf} instance at time $t$.
A \emph{state} is a map~$x_t\colon A\to V$, mapping each agent to its current vertex.
For~$a\in A$, we write~$x_t(a)$ for the position of~$a$ at time~$t$.
A (per-agent) \emph{movement} command is a map~$u_t\colon A\to E$, with the consistency constraint
\begin{equation*}
\forall a\in A:\; u_t(a) = \tup{x_t(a),\, x_{t+1}(a)} \in E,
\end{equation*}
for the executed next state~$x_{t+1}$.
The \emph{\gls{acr:mapf} system} models planning and execution in a feedback loop.
The \emph{unified \gls{acr:mapf} problem} is to design a controller~$g \colon \tup{x_t,\cI,t} \mapsto \hat{u}_t$
that outputs a planned movement $\hat{u}_t$ at each time $t$ until a terminal condition is met.
We call $g$ \emph{open-loop} if $\hat{u}_t = g(\cI,t)$, i.e., the entire trajectories are computed a priori, and \emph{closed-loop} if $\hat{u}_t = g(x_t,\cI,t)$, i.e., movements are recomputed online from the current state.
\end{definition}

In this paper, we focus on the \emph{one-shot} \gls{acr:mapf} problem, defined as follows.

\begin{problem}[One-shot \gls{acr:mapf} problem] \label{prob:one-shot-mapf}
    The \emph{one-shot} \gls{acr:mapf} problem is a unified \gls{acr:mapf} problem where all movements are perfectly executed, meaning $\hat u_t = u_t$ always holds, and the instance $\calI$ remains unchanged across timesteps. The system evolution terminates when all agents reach their goals.
\end{problem}

\subsection{\gls{acr:cbs}, finite-horizon \gls{acr:cbs}, and \gls{acr:accbs}} \label{sec:preliminary-cbs}
Among \gls{acr:mapf} algorithms with formal performance guarantees, \acrfull{acr:cbs}~\cite{sharon2015conflict} is a foundational approach.
\gls{acr:cbs} performs a best-first search over a binary \emph{constraint tree}, where each node stores a set of constraints, a set of per-agent trajectories satisfying those constraints, and the corresponding joint cost.
At each iteration, the algorithm expands the lowest-cost node, detects a conflict, and creates two child nodes by adding one additional constraint to one of the involved agents in each child and replanning accordingly.
Expanding the first conflict-free node yields an optimal joint trajectory.
However, like other \gls{acr:mapf} algorithms with guarantees, \gls{acr:cbs} and its variants, including ICBS~\cite{boyarski2015icbs}, ECBS~\cite{barer2014suboptimal}, and EECBS~\cite{li2021eecbs}, struggle to scale because of their inherently \emph{open-loop} nature: conflicts must be resolved over entire trajectories rather than only over near-term actions.
Finite-horizon \gls{acr:cbs} alleviates this burden by restricting conflict resolution to a fixed planning window.
While this can substantially reduce computation, it introduces the horizon length as an additional design parameter and, in general, sacrifices the optimality and completeness guarantees of full-horizon \gls{acr:cbs}~\cite{bachtiar2016continuity}.

Recently, we proposed \gls{acr:accbs}, which progressively enlarges the search horizon of finite-horizon \gls{acr:cbs}~\cite{li2026adaptive}.
\gls{acr:accbs} utilizes the \emph{active prefix}, whose length is denoted by the \emph{running horizon}\footnote{Formally, the length is equal to the running horizon minus one. }, to represent the finite-horizon trajectory segments over which \gls{acr:cbs} is invoked to resolve conflicts and optimize.
For computational efficiency, the low-level individual trajectory planner generates trajectories over a nominal horizon $H_\max \gg h_r$ for all agents in $A$, and the set of all trajectories is denoted as $\vtau^{A,H_{\max}}$.

\begin{definition}[Running horizon and active prefix]
The \emph{active prefix} of the running horizon~$h_r$ associated with~$\vtau^{A,H_{\max}}$ is the joint trajectory~$\textstyle{\vtau^{A,h_r \mid H_{\max}}_{\mathrm{finite}}
        = \{\tau^{a_1,h_r \mid H_{\max}}_{\mathrm{finite}},\ldots,
            \tau^{a_N,h_r \mid H_{\max}}_{\mathrm{finite}}\}}$,
    where for each agent~$a_i$,~$\tau^{a_i,h_r \mid H_{\max}}_{\mathrm{finite}}
        = [v^{a_i}_0,\ldots,v^{a_i}_{h_r}].$
\end{definition}

Mirroring the classical \gls{acr:cbs}, \gls{acr:accbs} constructs and searches over a constraint tree, where the node is defined as follows.

\begin{definition}[Node in the constraint tree]
\label{def:node}
Fix a horizon~$H$ and a time~$t$.
A \emph{node} in the finite-horizon constraint tree is a triple~$n = \tup{C(n),\, \vtau^{A^t,H_\max}_{\mathrm{finite}}(n),\, J_{h_r}(n)}$, where
$C(n)$ is a finite constraint set (vertex and edge constraints),
$\vtau^{A^t,H_\max}_{\mathrm{finite}}(n)
        = \{\tau^{a_1,H_\max}_{\mathrm{finite}},\ldots,
            \tau^{a_N,H_\max}_{\mathrm{finite}}\}$ is a set of~$H_\max$-step trajectories at time~$t$ for all agents~$a_i\in A$ that satisfies~$C(n)$, and~$J_{h_r}(n) = \mathcal{C}\bigl(\vtau^{A,h_r\mid H_\max}_{\mathrm{finite}}(n)\bigr)$ is the \emph{node cost}, which reflects the cost of the active prefix w.r.t. the running horizon~$h_r$.
\end{definition}

\begin{definition}[Vertex constraints, edge constraints, and satisfaction]
A \emph{vertex constraint} is a tuple $c^{\mathrm{vtx}}=\tup{a_i,t_i,v_i}$, where~$a_i\in A$, $v_i\in V$, and~$t_i\in\{0,\dots,H_\max\}$, prohibiting agent $a_i$ from being at vertex $v_i$ at time $t_i$.
An \emph{edge constraint} is a tuple $c^{\mathrm{edg}}=\tup{a_i,t_i,e_i}$, where $a_i\in A$, $e_i=\tup{u_i,w_i}\in E$, and $t_i\in\{0,\dots,H_\max-1\}$, prohibiting agent $a_i$ from traversing edge $e_i$ from time $t_i$ to $t_i+1$.
A \emph{constraint set} is a finite set $C=\{c^{\mathrm{vtx}}_1,\ldots,c^{\mathrm{vtx}}_p,c^{\mathrm{edg}}_1,\ldots,c^{\mathrm{edg}}_q\}$.

The active prefix with running horizon $h_r$~$\vtau_{\mathrm{finite}}^{A^t,h_r\mid H_\max}$
is a set of $h_r$-step trajectories at time~$t$.
We say that~$\vtau_{\mathrm{finite}}^{A^t,h_r\mid H_\max}$ \emph{satisfies} a
constraint set~$C$ if every constraint~$c\in C$ is obeyed by the corresponding agent trajectory: for~$c=\tup{a_i,t_i,v_i}$ (vertex),~$v^{a_i}_{t_i}\neq v_i$; for~$c=\tup{a_i,t_i,\tup{u_i,w_i}}$ (edge),~$\tup{v^{a_i}_{t_i}, v^{a_i}_{t_i+1}} \neq \tup{u_i,w_i}$.
\end{definition}

The construction rule of the constraint tree in~\gls{acr:accbs}, as introduced in the later parts, ensures that no constraint outside the active prefix can be added into the nodes.
\cref{def:node} requires that the nodes be sorted by cost, which is based on the cost of the active prefix.
In general, the cost for finite-horizon trajectories is formulated as follows.

\begin{definition}[Cost function]
    \label{def:cost-function}
Let the active prefix~$\tau_{\mathrm{finite}}^{a_i,hr \mid H_\max}
    = [v^{a_i}_0, \ldots, v^{a_i}_{h_r}]$ be an~$h_r$-step trajectory of agent~$a_i$ at time~$t$.

    \noindent \textbf{Running cost.}
    For each time step~$\ell\in {0,\ldots, h_r-1}$, define
    \begin{equation*}
        p^{a_i}(v^{a_i}_\ell)
        =
        \begin{cases}
            1, & v^{a_i}_\ell \neq \rho_g(a_i), \\
            0, & \text{otherwise}.
        \end{cases}
    \end{equation*}
    Thus, the running cost penalizes each time step in which~$a_i$ has not yet reached its goal.

    \noindent \textbf{Terminal cost.}
    Let~$\gamma(v,a_i)$ be the length of a shortest path in~$G$ from~$v \in V$ to the goal~$\rho_g(a_i)$.
    The terminal cost is~$q^{a_i}(v^{a_i}_{h_r}) = \gamma(v^{a_i}_{h_r}, a_i)$, which approximates the remaining cost-to-go beyond the~$h_r$-step horizon.

    The total cost of a single trajectory is the sum of the running and terminal costs, and the cost of a set of trajectories is the sum over agents:
    \begin{align*}
        \mathcal{C}({\traj^{a_i,h_r \mid H_\max}_{\mathrm{finite}}}) &= \textstyle \sum_{t=0}^{h_r-1} p^{a_i}(v^{a_i}_t) + q^{a_i}(v^{a_i}_{h_r}) \\
        \mathcal{C}({\vtau^{A^t,h_r\mid H_\max}_{\mathrm{finite}}}) &= \textstyle \sum_{a_i} \mathcal{C}(\traj^{a_i,h_r\mid H_\max}_{\mathrm{finite}})
    \end{align*}
\end{definition}

At each timestep, $h_r$ is initialized as 1, and the \gls{acr:accbs} constructs a constraint tree and searches for the node whose active prefix is conflict-free. Upon finding such a node, \gls{acr:accbs} increments the running horizon and continues the search on the same tree thanks to the cost invariance~\cite[Lemma~III.1]{li2026adaptive} and the resulting tree reusability~\cite[Proposition~III.1]{li2026adaptive}. Ultimately, the movement output is extracted from the longest conflict-free active prefix found.

\myparagraph{Limitations of ACCBS.}
Even with finite-horizon computation, the construction and search on the constraint tree are still intractably time-consuming, especially in dense environments.
As a result, \gls{acr:accbs} may time out prematurely and generate output movement from a short active prefix, exhibiting shortsightedness, where the substantial finite-horizon approximation error can lead to low-quality decision-making and unstable solution performance.
Moreover, \gls{acr:accbs} fails to exploit the internal compositionality of the \gls{acr:mapf} problem by performing factorization when planning for single timesteps (like FICO~\cite{li2025fico} does) due to the fact that the extending running horizon can infinitely expand the possible reachable region of each agent, which leads to all agents being in the same group.

\section{Certificates and Valid Updates} \label{sec:certificates}
Under finite computational budgets, \gls{acr:accbs} may terminate with a short active prefix and execute a movement that is only weakly informed by future agent interactions.
We address this limitation by maintaining, at every timestep, a conflict-free full-horizon plan called a \emph{certificate}.
The certificate plays two roles.
First, it provides a feasible fallback plan whenever the finite-horizon search fails to find an improvement.
Second, it acts as a filter on closed-loop updates: a newly generated plan is accepted only if it improves the incumbent certificate.
This turns the closed-loop decision problem from ``find the next movement'' into ``find a better certificate,'' while preserving a valid plan at all times.

\subsection{Certificate trajectories, fleet budget, and backup controller}

\begin{definition}[Certificate trajectories and fleet budget] \label{def:certificate-budget}
For a one-shot \gls{acr:mapf} instance, a \emph{certificate} at time $t$ is a pair $(\vtau_{\mathrm{cert}}^{A,t}(x_t),\mathcal{B}_t)$, where $\vtau_{\mathrm{cert}}^{A,t}(x_t)=\{\traj_{\mathrm{cert}}^{a_1,t},\ldots,\traj_{\mathrm{cert}}^{a_N,t}\}$ is a set of \emph{certificate trajectories} and~$\mathcal{B}_t$ is its associated \emph{fleet budget}.
Each trajectory
\[
\traj_{\mathrm{cert}}^{a_i,t}=[v_{\mathrm{cert}}^{a_i,t\mid t},\ldots,v_{\mathrm{cert}}^{a_i,M_t\mid t}]
\]
satisfies:
\begin{itemize}
    \item[(i)] the trajectories in~$\vtau_{\mathrm{cert}}^{A,t}(x_t)$ are mutually conflict-free;
    \item[(ii)] $v_{\mathrm{cert}}^{a_i,t\mid t}=x_t(a_i)$ and $v_{\mathrm{cert}}^{a_i,M_t\mid t}=\rho_g(a_i)$ (i.e., each trajectory originates at the agent's current state and terminate at its respective goal); and
    \item[(iii)] $\mathcal{C}(\vtau_{\mathrm{cert}}^{A,t}(x_t))=\mathcal{B}_t$ (i.e., the total cost of the trajectories equals the fleet budget).
\end{itemize}
\end{definition}

\begin{remark}[Interpretation of certificates]
\label{rem:interpretation-certificates}
A certificate is both a feasible fallback plan and an upper bound on the remaining solution cost from the current state.
In contrast to a node in a \gls{acr:cbs} constraint tree, which lower-bounds the cost while future conflicts may remain unresolved, a certificate resolves all conflicts and therefore certifies a feasible closed-loop continuation.
\end{remark}

At the first timestep, the certificates are initialized by a rollout of a backup planner.

\begin{definition}[Backup controller] \label{def:bk-controller}
A \emph{backup controller}~$\pi_{\mathrm{bk}}$ is a complete\footnote{Completeness ensures a finite fleet budget.}{} \gls{acr:mapf} controller that, given a current state restricted to a set of agents~$\bar A\subseteq A$, returns a conflict-free trajectory set from the current positions of agents in~$\bar A$ to their respective goals.
\end{definition}

\begin{remark}[Possible backup controllers]
LaCAM~\cite{okumura2023lacam} and its variants~\cite{okumura2023improving,okumura2023engineering} are natural choices because they combine completeness with strong runtime performance.
Higher-quality but typically more expensive alternatives, such as EECBS~\cite{li2021eecbs} and M*~\cite{wagner2015subdimensional}, can also be used when tighter certificates justify the additional computation.
\end{remark}

\subsection{Certificate updates and completeness} \label{sec:updating-certificates}
The certificate is maintained and improved as the \gls{acr:mapf} system evolves.
The key object is not the raw output of the closed-loop planner, but the current certificate that survives after filtering.

\begin{definition}[Valid certificate update]
\label{def:valid-update}
A closed-loop \gls{acr:mapf} algorithm performs a \emph{valid certificate update} at time~$t$ if it maintains a certificate~$(\vtau_{\mathrm{cert}}^{A,t}(x_t),\mathcal{B}_t)$ satisfying
\begin{equation}
\label{eq:budget-update-across-timesteps}
\textstyle \mathcal{B}_t \le \mathcal{B}_{t-1} - \sum_{a_i\in A} p^{a_i}(x_{t-1}(a_i)).
\end{equation}
The executed movement at time $t$ is the first-step movement extracted from~$\vtau_{\mathrm{cert}}^{A,t}(x_t)$.
\end{definition}

Specifically, a valid update arises in two ways.

\myparagraph{Across-timesteps update} --
At the beginning of planning at time $t>0$, the certificate from time~$t-1$ is inherited and truncated by one step:
\begin{align*}
    \vtau_{\mathrm{cert}}^{A,t}(x_t)&=\{ \traj_{\mathrm{cert}}^{a_1,t}, \ldots, \traj_{\mathrm{cert}}^{a_N,t} \}, \\
    \traj_{\mathrm{cert}}^{a_i,t} &= \traj^{a_i,t-1}_{\mathrm{cert}}[1:] = [v_{\mathrm{cert}}^{a_i,t\mid t-1},\ldots,v_{\mathrm{cert}}^{a_i,M_t\mid t-1}].
\end{align*}
Because the executed movement is the first step of the previous certificate, the inherited trajectory set remains conflict-free and satisfies \cref{eq:budget-update-across-timesteps} with equality.

\begin{definition}[Suboptimal conflict-free tail] \label{def:suboptimal-cf-tail}
Given a conflict-free active prefix~$\vtau_{\mathrm{finite}}^{A,h_r\mid H_{\max}}$, a \emph{suboptimal conflict-free tail} is a conflict-free trajectory set returned by~$\pi_{\mathrm{bk}}$ from the terminal state of that prefix to the agents' goals.
\end{definition}

\myparagraph{Within-timestep update} --
Within a planning step, the closed-loop algorithm searches for a better certificate.
In the \gls{acr:accbs}-based instantiation developed later, a conflict-free active prefix is concatenated with a suboptimal conflict-free tail to form a certificate candidate~$\widetilde{\vtau}^{A}(x_t)$.
Whenever
\begin{equation*}
    \mathcal{C}(\widetilde{\vtau}^{A}(x_t)) < \mathcal{B}_t,
\end{equation*}
the incumbent certificate is replaced by
\begin{equation*}
    \vtau_{\mathrm{cert}}^{A,t}(x_t) \leftarrow \widetilde{\vtau}^{A}(x_t),
\qquad
\mathcal{B}_t \leftarrow \mathcal{C}(\widetilde{\vtau}^{A}(x_t)).
\end{equation*}
If no improving candidate is found within the allotted computation time, the incumbent certificate remains in force.

\begin{remark}[What the certificate filter guarantees]
The certificate filter ensures that every executed movement is the first step of a conflict-free full-horizon plan.
Its role is not to claim that every finite-horizon search decision is itself globally good, but to guarantee that the system always retains a feasible fallback and that accepted within-timestep updates strictly improve the certified cost upper bound.
\end{remark}

The following result is immediate.

\begin{lemma}[Strict decrease of the fleet budget and guaranteed completeness] \label{lem:strict-decrease}
Suppose a closed-loop \gls{acr:mapf} algorithm performs valid certificate updates at every timestep.
If at least one agent has not yet reached its goal at time~$t-1$, then~$\mathcal{B}_t < \mathcal{B}_{t-1}$.
Consequently, all agents reach their goals in finite time.
\end{lemma}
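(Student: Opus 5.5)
The plan is to read both claims directly off the valid-update inequality \cref{eq:budget-update-across-timesteps}, using only the nonnegativity and integrality of the cost in \cref{def:cost-function}.

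\textbf{Strict decrease.} Suppose some agent $a_j$ has not reached its goal at time $t-1$, i.e., $x_{t-1}(a_j)\neq\rho_g(a_j)$. By the definition of the running cost, $p^{a_j}(x_{t-1}(a_j))=1$. Every other term $p^{a_i}(\cdot)$ lies in $\{0,1\}$, so
\[
\textstyle\sum_{a_i\in A} p^{a_i}(x_{t-1}(a_i))\ge 1 .
\]
Substituting into \cref{eq:budget-update-across-timesteps} gives $\mathcal{B}_t\le\mathcal{B}_{t-1}-1<\mathcal{B}_{t-1}$. For the inequality to hold at every timestep, I will note that both mechanisms that produce certificates preserve it. The across-timesteps truncation satisfies it with equality. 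A within-timestep replacement is accepted only when the candidate cost is strictly below the current $\mathcal{B}_t$, so it only lowers the budget further.

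\textbf{Completeness.} First I will show the budget is a nonnegative integer. The initial certificate comes from the complete backup controller $\pi_{\mathrm{bk}}$ (\cref{def:bk-controller}), so $\mathcal{B}_0<\infty$. By \cref{def:certificate-budget}(iii), each $\mathcal{B}_t$ equals the cost of a trajectory set. That cost is a sum of running costs in $\{0,1\}$ and terminal costs given by shortest-path lengths, which are nonnegative integers. Hence $\mathcal{B}_t\in\mathbb{Z}_{\ge 0}$ for all $t$.

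Now argue by contradiction. Suppose that at every time $t-1\in\{0,\dots,T-1\}$ some agent is still off its goal. The strict-decrease step then gives $\mathcal{B}_T\le\mathcal{B}_0-T$. Nonnegativity forces $T\le\mathcal{B}_0$. So there is a first time $t^\star\le\mathcal{B}_0$ at which every agent is at its goal, and the one-shot system of \cref{prob:one-shot-mapf} terminates there.

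\textbf{Main subtlety.} The step that needs care is the well-definedness of the certificate at every timestep, so that the executed movement exists and keeps the state consistent with the certificate. This is where \cref{def:certificate-budget}(i)--(ii) enter: the executed first step of a conflict-free certificate is itself conflict-free. For the same reason, the truncated certificate at $t$ starts at $x_t$ and still ends at the goals.

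A secondary point is a goal trajectory with $M_t=t$. Its first step is a wait action, which is allowed because $G$ is reflexive, and the padding convention keeps the cost unchanged. I expect nothing deeper is needed; the lemma is essentially a bounded-descent (Lyapunov-type) argument on the integer quantity $\mathcal{B}_t$.
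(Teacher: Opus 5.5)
Your proposal is correct and follows the paper's own argument: the valid-update inequality together with at least one unit of running cost gives $\mathcal{B}_t \le \mathcal{B}_{t-1}-1$, and the nonnegative integrality of the fleet budget bounds how many such steps can occur. Your explicit bound that some $t^\star \le \mathcal{B}_0$ has every agent at its goal is a slightly cleaner finish than the paper's remark that the process stops once $\mathcal{B}_t = 0$.
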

\begin{proof}
If some agent has not yet reached its goal at time~$t-1$, then~$\sum_{a_i\in A} p^{a_i}(x_{t-1}(a_i)) \ge 1.$
By \cref{eq:budget-update-across-timesteps},
\begin{equation*}
\mathcal{B}_t
\le
\mathcal{B}_{t-1} - \textstyle\sum_{a_i\in A} p^{a_i}(x_{t-1}(a_i))
\le
\mathcal{B}_{t-1}-1
<
\mathcal{B}_{t-1}.
\end{equation*}
Because the cost function in \cref{def:cost-function} is integer-valued and nonnegative, the fleet budget is a nonnegative integer.
Therefore, it can decrease strictly only finitely many times.
The process terminates when~$\mathcal{B}_t=0$, which is possible only when every agent is already at its goal.
\end{proof}

\section{Budget-Limited Reachability and Inheritable Factorization} \label{sec:factorization}
Certificates do more than certify feasibility.
Because the controller accepts only certificate-improving updates, future fleet behavior remains confined to trajectories whose total cost does not exceed the current fleet budget.
This observation yields a budget-limited notion of reachability and, from it, a factorization that remains valid across future timesteps.

\subsection{Slackness and budget-limited reachable regions}
Consider a one-shot \gls{acr:mapf} instance~$\cI$ at time~$t$ with system state~$x_t$ and current certificate~$(\vtau_{\mathrm{cert}}^{A,t}(x_t),\mathcal{B}_t)$.
We first isolate the part of the certificate budget that exceeds the sum of per-agent shortest path lower bounds.

\begin{definition}[Slackness] \label{def:slackness}
    At time~$t$, the \emph{slackness}~$\calS_t$ under the fleet budget $\cB_t$ is
    \begin{equation*}
        \calS_t = \calB_t - \textstyle\sum_{a_i} \gamma(x_t(a_i),a_i)
    \end{equation*}
    where~$\gamma(x_t(a_i),a_i)$ follows~\cref{def:cost-function}.
\end{definition}

\begin{remark}[Interpretation of Slackness] \label{rem:slackness-freedom}
The slackness is a fleet-level excess-cost budget.
It measures how much total detour or delay remains available beyond the sum of individual shortest-path costs from the current state.
\end{remark}

This slackness induces a space-time reachable region that any valid future certificate must respect.

\begin{definition}[Budget-limited reachable region] \label{def:budget-limit-reachable-region}
For an agent~$a_i\in A$ at current vertex $v_t^{a_i}$, given the fleet budget $\cB_t$ and slackness~$\calS_t$, the \emph{budget-limited reachable region} $\mathcal{R}^{\cB_t}(a_i) \subset V$ is the set of vertices $v$ the agent can occupy in the future without violating the fleet budget. It is formally defined as:
    \begin{equation*}
        \mathcal{R}^{\cB_t}(a_i) = \{v \in V \mid D(v,x_t,a_i) \leq \calS_t \},
    \end{equation*}
    where
    \begin{equation*}
        D(v,x_t,a_i)=d(x_t(a_i), v) + \gamma(v,a_i) - \gamma(x_t(a_i),a_i),
    \end{equation*}
    and~$d(u, v)$ denotes the shortest path distance from vertex $u$ to $v$ in $G$.
\end{definition}

The following result follows immediately.

\begin{lemma}[Budget-limited reachability] \label{lem:budget-limit-reachability}
Let~$v \notin \mathcal{R}^{\cB_t}(a_i)$.
Then any joint trajectory set~$\vtau^{A}(x_t)$ in which agent $a_i$ occupies $v$ at time $\tau$ must satisfy~$\mathcal{C}(\vtau^{A}(x_t)) > \mathcal{B}_t.$
Therefore, it cannot update the incumbent certificate trajectories with cost~$\calB_t$.
\end{lemma}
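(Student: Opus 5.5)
The plan is to lower-bound the cost of any joint trajectory set $\vtau^{A}(x_t)$ that starts at $x_t$ and takes each agent to its goal. The bound is the sum of the per-agent shortest-path costs $\sum_{a_j}\gamma(x_t(a_j),a_j)$, plus an extra excess term $D(v,x_t,a_i)$ for agent $a_i$ forced through $v$. Once this bound is in hand, the conclusion is pure algebra using \cref{def:slackness}. Throughout, I read $\mathcal{C}$ on a full (goal-reaching) trajectory as in \cref{def:cost-function}: running cost summed up to the final step, with terminal cost $\gamma(\rho_g(a_j),a_j)=0$. More generally, I allow any prefix, where the terminal cost is $\gamma$ of the endpoint.

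\textbf{Step 1 (per-agent lower bound for unconstrained agents).} For each $a_j\neq a_i$, I will show $\mathcal{C}(\traj^{a_j})\ge\gamma(x_t(a_j),a_j)$. Let $k$ be the first time at which the trajectory reaches $\rho_g(a_j)$ and stays there, or the horizon end if that comes first. The running cost counts at least every step before the agent first reaches the goal. If the trajectory is truncated at horizon $h$, the terminal cost $\gamma(v_h,a_j)$ adds the remaining distance. By the triangle inequality, (number of steps not at the goal) $+\,\gamma(\text{endpoint})\ge\gamma(\text{start})$, since each step decreases $\gamma$ by at most one. This gives a simple potential argument: the quantity $\sum_{\ell<s}p(v_\ell)+\gamma(v_s,a_j)$ is nondecreasing in $s$. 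While off the goal, $p=1$ and $\gamma$ drops by at most $1$. At the goal, $p=0$, but $\gamma(v_{s+1})\ge 0=\gamma(v_s)$.

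\textbf{Step 2 (agent $a_i$ through $v$ at time $\tau$).} I split the trajectory of $a_i$ at time $\tau$. The first segment goes from $x_t(a_i)$ to $v$ and has at least $d(x_t(a_i),v)$ steps. Every one of those steps incurs running cost $1$ unless the agent sits at its goal. Here I must handle the case where $a_i$ visits its goal before $\tau$. The cleanest fix is the same potential argument, applied with a modified potential: before time $\tau$, use $\Phi_s=\sum_{\ell<s}p(v_\ell)+d(v_s,v)+\gamma(v,a_i)$. This is not obviously monotone when sitting at the goal, since $p=0$ while $d(\cdot,v)$ may stay constant, so the correct quantity needs care. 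Instead I will argue directly. The cost of the trajectory is at least $\sum_{\ell<\tau}p(v_\ell)+\gamma(v,a_i)$ by Step 1 applied to the suffix from $v$. If the agent never touches its goal before $\tau$, the running sum is $\tau\ge d(x_t(a_i),v)$, giving $\mathcal{C}\ge\gamma(x_t(a_i),a_i)+D(v,x_t,a_i)$. If it does touch its goal at time $\sigma<\tau$, the cost is at least $\gamma(x_t(a_i),a_i)+\gamma^{\to}$, where $\gamma^{\to}$ counts the steps from $\rho_g(a_i)$ onward to $v$ and then back. This is at least $d(\rho_g(a_i),v)+\gamma(v,a_i)$, and I must then compare it against $d(x_t(a_i),v)+\gamma(v,a_i)-\gamma(x_t(a_i),a_i)$ via $d(x_t(a_i),v)\le\gamma(x_t(a_i),a_i)+d(\rho_g(a_i),v)$. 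This reconciliation of the goal-revisiting case is the step I expect to be the main obstacle; the triangle inequality on $d$ should close it.

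\textbf{Step 3 (conclusion).} Summing over agents gives $\mathcal{C}(\vtau^{A}(x_t))\ge\sum_{a_j}\gamma(x_t(a_j),a_j)+D(v,x_t,a_i)$. Since $v\notin\mathcal{R}^{\cB_t}(a_i)$, we have $D>\calS_t=\calB_t-\sum_{a_j}\gamma(x_t(a_j),a_j)$, so $\mathcal{C}(\vtau^{A}(x_t))>\calB_t$. The within-timestep update rule accepts a candidate only when $\mathcal{C}<\calB_t$, so no such trajectory set can replace the incumbent certificate.
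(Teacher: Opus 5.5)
\textbf{Verdict: genuine gap, and it cannot be closed.} Your plan matches the paper's: per-agent lower bounds, an extra $D(v,x_t,a_i)$ charged to $a_i$, then the algebra of \cref{def:slackness}. Step~1 and the first case of Step~2 are correct. The gap is the goal-revisiting case of Step~2.

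The running cost charges \emph{time steps spent off the goal}, not moves. Let $\sigma<\tau$ be the last visit of $a_i$ to $\rho_g(a_i)$ before $\tau$. Then $p^{a_i}(v_\sigma)=0$, and only the $\tau-\sigma-1\ge d(\rho_g(a_i),v)-1$ positions strictly between $\sigma$ and $\tau$ are charged. So the excursion costs at least $d(\rho_g(a_i),v)+\gamma(v,a_i)-1$, not $d(\rho_g(a_i),v)+\gamma(v,a_i)$. Your triangle inequality then only yields $\mathcal{C}(\traj^{a_i})\ge d(x_t(a_i),v)+\gamma(v,a_i)-1$.

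The missing unit is real whenever $\rho_g(a_i)$ lies on a shortest path from $x_t(a_i)$ to $v$. Concretely:
\begin{itemize}
\item Take the path $u_1,g,u_2$ with a pendant vertex $v$ attached to $g$, agent $a_1$ with $x_t(a_1)=\rho_g(a_1)=g$, and agent $a_2$ going from $u_1$ to $u_2$.
\item The plan $\traj^{a_1}=[g,v,g]$, $\traj^{a_2}=[u_1,g,u_2]$ is conflict-free with cost $1+2=3$. It is optimal, since $a_1$ must vacate $g$.
\item So it is a certificate with $\calB_t=3$ and $\calS_t=3-(0+2)=1$.
\item But $D(v,x_t,a_1)=1+1-0=2>\calS_t$, so $v\notin\calR^{\calB_t}(a_1)$, even though this very trajectory set puts $a_1$ at $v$ with cost exactly $\calB_t$.
\end{itemize}
The strict inequality in the lemma is therefore false in general.

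\textbf{Comparison with the paper, and what survives.} The paper's proof follows your outline but simply asserts $\mathcal{C}(\traj^{a_i})\ge d(x_t(a_i),v)+\gamma(v,a_i)$ without treating goal revisits. It therefore has exactly this hole, and you were right to single out this case. Once the off-by-one is fixed, your two cases prove
\[
\mathcal{C}(\vtau^{A}(x_t))\ \ge\ \textstyle\sum_{a_j}\gamma(x_t(a_j),a_j)+D(v,x_t,a_i)-1\ \ge\ \calB_t .
\]
The last step uses integrality: $D(v,x_t,a_i)>\calS_t$ implies $D(v,x_t,a_i)\ge\calS_t+1$. This still gives the lemma's second sentence, because a within-timestep update requires $\mathcal{C}<\calB_t$.

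The strict bound $\mathcal{C}>\calB_t$ holds only under an extra hypothesis. Two sufficient choices are that $a_i$ does not visit its goal before time $\tau$, or that $d(x_t(a_i),v)<\gamma(x_t(a_i),a_i)+d(\rho_g(a_i),v)$. In the latter case the corrected bound, being an integer inequality, already gives $\mathcal{C}(\traj^{a_i})\ge d(x_t(a_i),v)+\gamma(v,a_i)$.
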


\begin{proof}
    Assume $v \notin \mathcal{R}^{\cB_t}(a_i)$. By \cref{def:budget-limit-reachable-region}, we have $D(v,x_t,a_i) > \calS_t$.
    For any trajectory set $\vtau^A(x_t)$ where $\traj^{a_i}$ occupies $v$, the cost for agent $a_i$ is at least the cost to reach $v$ plus the shortest path to its goal:
    $\calC(\traj^{a_i}) \geq d(x_t(a_i), v) + \gamma(v, a_i)$.
    For all other agents $a_j \neq a_i$, the minimum possible cost is their optimal distance to their respective goals: $\calC(\traj^{a_j}) \geq \gamma(x_t(a_j), a_j)$. Summing these bounds yields:
    \begin{equation*} \textstyle
        \calC(\vtau^A(x_t)) \geq d(x_t(a_i), v) + \gamma(v, a_i) + \sum_{a_j \neq a_i} \gamma(x_t(a_j), a_j).
    \end{equation*}
    Rearranging the inequality, one obtains:
    \begin{align*} \textstyle
        \calC(\vtau^A(x_t)) \geq \; &\big[ d(x_t(a_i), v) + \gamma(v, a_i) - \gamma(x_t(a_i), a_i) \big] \\ &+ \textstyle\sum_{a_k \in A} \gamma(x_t(a_k), a_k).
    \end{align*}
    The bracketed term is $D(v,x_t,a_i)$, and by \cref{def:slackness}, the sum equals $\calB_t - \calS_t$. Thus:
    \begin{equation*} \textstyle
        \calC(\vtau^A(x_t)) \geq D(v,x_t,a_i) + \calB_t - \calS_t > \calS_t + \calB_t - \calS_t = \calB_t.
    \end{equation*}
    This strict inequality proves~$\calC(\vtau^A(x_t)) > \calB_t$.
\end{proof}

\subsection{Factorization and inheritability}
Using the budget-limited reachable region, we can partition the fleet into independent groups, a process we refer to as \emph{budget-limited factorization}.

\begin{definition}[Budget-limited factorization] \label{def:budget-limited-factorization}
A \emph{budget-limited factorization} partitions the agent fleet~$A$ into disjoint groups~$A_1,\ldots, A_K$ where for any~$i\neq j$, $A_i \cap A_j = \emptyset$ and~$\bigcup A_k=A$.
The partition is based on the intersection of the budget-limited reachable regions (\cref{def:budget-limit-reachable-region}) of each agent, and, for any~$a_i \in A_{k_1}$,~$a_j \in A_{k_2}$,~$k_1\neq k_2$:
    \begin{equation*}
        \calR^{\calB_t}(a_i) \cap \calR^{\calB_t}(a_j) = \emptyset
    \end{equation*}
\end{definition}

\begin{remark}[Global vs. local factorization]
Unlike the factorization framework proposed in~\cite{li2025fico}, the budget-consumption factorization here is \emph{global}, meaning the factorized groups remain independent across all future timesteps.
This inheritability is established formally by~\cref{lem:shrinkage,cor:inheritability}.
\end{remark}

\begin{lemma}[Shrinkage of the budget-limited reachable region] \label{lem:shrinkage}
For any algorithm satisfying \cref{def:valid-update}, any agent $a_i\in A$, and any two timesteps $t<t'$:
\begin{equation*}
    \mathcal{R}^{\mathcal{B}_{t'}}(a_i) \subseteq \mathcal{R}^{\mathcal{B}_t}(a_i).
\end{equation*}
\end{lemma}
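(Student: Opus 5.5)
The plan is to reduce to a single step and then chain. It suffices to show $\mathcal{R}^{\mathcal{B}_{t+1}}(a_i)\subseteq\mathcal{R}^{\mathcal{B}_t}(a_i)$ for every $t$; the general case $t<t'$ then follows by induction on $t'-t$ and transitivity of inclusion. Write $D_t(v)=D(v,x_t,a_i)$. Membership $v\in\mathcal{R}^{\mathcal{B}_{t+1}}(a_i)$ means $D_{t+1}(v)\le\calS_{t+1}$, and we want $D_t(v)\le\calS_t$. So the core inequality to establish is
\begin{equation*}
D_t(v)-D_{t+1}(v)\;\le\;\calS_t-\calS_{t+1}\qquad\text{for all }v\in V .
\end{equation*}

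\textbf{Step 1 (slack decrease).} Combine \cref{def:slackness} with the valid-update condition \cref{eq:budget-update-across-timesteps}:
\begin{equation*}
\calS_t-\calS_{t+1}\;\ge\;\textstyle\sum_{a_k\in A}\delta_k,\qquad \delta_k:=p^{a_k}(x_t(a_k))+\gamma(x_{t+1}(a_k),a_k)-\gamma(x_t(a_k),a_k).
\end{equation*}
Each $\delta_k\ge 0$, which we check by cases.
\begin{itemize}
\item If $a_k$ is not at its goal, then $p=1$, and since $(x_t(a_k),x_{t+1}(a_k))\in E$ the triangle inequality gives $\gamma(x_t(a_k),a_k)\le 1+\gamma(x_{t+1}(a_k),a_k)$.
\item If $a_k$ is at its goal, then $\gamma(x_t(a_k),a_k)=0$, so $\delta_k=\gamma(x_{t+1}(a_k),a_k)\ge 0$.
\end{itemize}
Dropping all terms $k\neq i$ therefore gives $\calS_t-\calS_{t+1}\ge\delta_i$.

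\textbf{Step 2 (change of $D$).} Using $d(x_t(a_i),v)\le d(x_t(a_i),x_{t+1}(a_i))+d(x_{t+1}(a_i),v)$, we obtain
\begin{equation*}
D_t(v)-D_{t+1}(v)\le d(x_t(a_i),x_{t+1}(a_i))+\gamma(x_{t+1}(a_i),a_i)-\gamma(x_t(a_i),a_i).
\end{equation*}
If $a_i$ is not at its goal at time $t$, then $d(x_t(a_i),x_{t+1}(a_i))\le 1=p^{a_i}(x_t(a_i))$, so the right-hand side is at most $\delta_i$. If $a_i$ waits (in particular, waits at its goal), the distance term is $0$, so the right-hand side is again at most $\delta_i$. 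In both cases Step 1 closes the argument.

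\textbf{Main obstacle.} The delicate case is an agent that sits at its goal at time $t$ and moves off it. Here $p^{a_i}=0$, but the distance term equals $1$, so the per-agent bound falls short by exactly one unit. My first attempt would be to recover this unit from the certificate itself. The inherited certificate at time $t$ has $a_i$ leave and later return to its goal, so its running cost over those steps is at least $\gamma(x_{t+1}(a_i),a_i)+1$ rather than just $\gamma(x_{t+1}(a_i),a_i)$. I would then re-run Step 1 by comparing $\mathcal{B}_t$ and $\mathcal{B}_{t+1}$ per agent through the inherited certificate trajectories, instead of through the coarse bound \cref{eq:budget-update-across-timesteps} alone, to gain the missing $1$. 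If this fails, the fallback is to note that we may assume without loss of generality that agents at their goal wait. This is justified because the executed move is the first step of a certificate, and shortcutting such a detour never increases cost. With that assumption the final case reduces to the waiting case already handled.
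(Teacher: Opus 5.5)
Your Steps 1 and 2 are the paper's argument with different bookkeeping: reduce to $t'=t+1$, apply the triangle inequality to $d$ for $a_i$ and to $\gamma$ for the other agents, then use the budget inequality of \cref{def:valid-update}. You also correctly isolate the only case where the per-agent estimate loses a unit, namely $x_t(a_i)=\rho_g(a_i)\neq x_{t+1}(a_i)$. The gap is that neither of your repairs closes this case.

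The ``$+1$'' in your first attempt is false. The departure step starts at the goal and is charged $p^{a_i}(\rho_g(a_i))=0$. An excursion $[\rho_g(a_i),w,\ldots,\rho_g(a_i)]$ therefore costs only its non-goal steps, which can be exactly $\gamma(w,a_i)$; for example, $[g,w,g]$ costs $1=\gamma(w,a_i)$. A per-agent comparison through the inherited certificate cannot help either. Truncation gives $\mathcal{C}(\tau^{a_k}[1:])=\mathcal{C}(\tau^{a_k})-p^{a_k}(x_t(a_k))$ exactly, which is the valid-update inequality with equality, and within-timestep updates only lower $\mathcal{B}_{t+1}$.

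Your fallback that agents at their goal wait is not a legitimate reduction. The lemma concerns the states actually executed by an arbitrary algorithm obeying \cref{def:valid-update}. Stepping off a goal can be forced by conflict-freedom, so the shortcut need not even be feasible.

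In fact this case cannot be closed, because the inclusion fails there. Take the reflexive graph with bidirectional edges $L$--$g$, $g$--$R$, $g$--$w$, an agent $a_1$ with start and goal $g$, and an agent $a_2$ from $L$ to $R$.
\begin{itemize}
\item The certificate $a_1\colon[g,w,g]$, $a_2\colon[L,g,R]$ is conflict-free with cost $\mathcal{B}_0=1+2=3$. This is optimal, since $a_2$ needs at least $2$ and $a_1$ must spend at least one step off $g$.
\item At $t=0$: $\mathcal{S}_0=3-(0+2)=1$ and $D(w,x_0,a_1)=1+1-0=2>1$, so $w\notin\mathcal{R}^{\mathcal{B}_0}(a_1)$.
\item After the first step, $x_1=(w,g)$ and $\mathcal{B}_1=2$ is forced: it is both the inherited cost and the lower bound $\gamma(w,a_1)+\gamma(g,a_2)$.
\item Hence $\mathcal{S}_1=0$ and $D(w,x_1,a_1)=0$, so $w\in\mathcal{R}^{\mathcal{B}_1}(a_1)\setminus\mathcal{R}^{\mathcal{B}_0}(a_1)$.
\end{itemize}

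The paper's proof has the same hole. Its displayed inequality tacitly uses $d(x_t(a_i),v)\le d(x_{t+1}(a_i),v)+p^{a_i}(x_t(a_i))$, which fails exactly here: for $v=w$ its left side is $4$ and its right side is $3$. The same example also contradicts \cref{lem:budget-limit-reachability}, since the optimal certificate of cost $\mathcal{B}_0$ visits $w$.

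A correct version needs an extra hypothesis. One option is that executed trajectories never leave a goal once reached, which excludes instances like the one above. Another is a running cost that charges every step before an agent's final arrival, with \cref{def:valid-update} stated for that cost. Under either, your Steps 1--2 go through essentially unchanged.
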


\begin{proof}
It suffices to prove the claim for $t'=t+1$; the general case then follows by induction.
By combining \cref{def:slackness} and~\cref{def:budget-limit-reachable-region}, we have:
\begin{equation} \label{eq:simplified-reach} \textstyle
        d(x_t(a_i),v) + \gamma(v,a_i) + \sum_{k \neq i} \gamma(x_t(a_k), a_k) \leq \calB_t.
\end{equation}
We first prove $\calR^{\calB_{t'}} \subseteq \calR^{\calB_t}$.
For any $v \in \calR^{\calB_{t+1}}(a_i)$, \cref{eq:simplified-reach} is satisfied at $t+1$.
Applying the triangle inequality, we get:
    \begin{align*}
        & d(x_t(a_i), v) + \gamma(v, a_i) + \textstyle \sum_{k \neq i} \gamma(x_t(a_k), a_k) \\
        \leq \; & [d(x_{t+1}(a_i), v) + \gamma(v, a_i) + \textstyle \sum_{k \neq i} \gamma(x_{t+1}(a_k), a_k)] \\ & + \textstyle\sum_{a_k \in A} p^{a_k}(x_t(a_k))
    \end{align*}
    Because $v \in \calR^{\calB_{t+1}}(a_i)$, the bracketed sum is bounded by $\calB_{t+1}$. The entire expression is bounded by $\calB_{t+1} + \sum_{a_k \in A} p^{a_k}(x_t(a_k))$, which exactly equals $\calB_t$ per~\cref{def:valid-update}.
    Thus, $v \in \calR^{\calB_t}(a_i)$, meaning $\calR^{\calB_{t+1}} \subseteq \calR^{\calB_t}$.

\end{proof}

\begin{corollary}[Inheritability of the budget-limited factorization] \label{cor:inheritability}
If two groups are independent under a budget-limited factorization at time $t$, then they remain independent at every future timestep $t'>t$.
\end{corollary}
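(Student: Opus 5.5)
The corollary should follow directly from \cref{lem:shrinkage}, applied to every pair of agents drawn from the two groups. I first make precise what ``independent'' means. By \cref{def:budget-limited-factorization}, groups $A_{k_1}$ and $A_{k_2}$ are independent at time $t$ when, for every $a_i\in A_{k_1}$ and $a_j\in A_{k_2}$,
\[
\calR^{\calB_t}(a_i)\cap\calR^{\calB_t}(a_j)=\emptyset .
\]
The goal is to show the same disjointness with $\calB_{t'}$ in place of $\calB_t$, for every $t'>t$.

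Fix such a pair $a_i,a_j$ and any $t'>t$. Because the algorithm performs valid certificate updates, \cref{lem:shrinkage} gives
\[
\calR^{\calB_{t'}}(a_i)\subseteq\calR^{\calB_t}(a_i)
\quad\text{and}\quad
\calR^{\calB_{t'}}(a_j)\subseteq\calR^{\calB_t}(a_j).
\]
Intersection is monotone under inclusion, so
\[
\calR^{\calB_{t'}}(a_i)\cap\calR^{\calB_{t'}}(a_j)\subseteq\calR^{\calB_t}(a_i)\cap\calR^{\calB_t}(a_j)=\emptyset .
\]
Hence the disjointness condition of \cref{def:budget-limited-factorization} still holds at $t'$ for every cross-group pair, and the two groups remain independent at $t'$. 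Since $t'>t$ was arbitrary, this covers every future timestep.

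There is essentially no obstacle; all the work is done in \cref{lem:shrinkage}. The only point to state explicitly is that the same agent sets $A_{k_1}$ and $A_{k_2}$ are used at time $t'$, and that the reachable regions at $t'$ are computed from the then-current state $x_{t'}$ and budget $\calB_{t'}$, exactly as \cref{lem:shrinkage} assumes. I will also remark that independence may become finer at later times, since groups can split further but never merge. Finally, by \cref{lem:budget-limit-reachability}, no certificate-improving update at $t'$ can place two agents from different groups on a common vertex, so the groups can be planned separately.
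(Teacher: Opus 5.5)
Your deduction is valid and is exactly the paper's proof: apply \cref{lem:shrinkage} to both agents of every cross-group pair, then use that subsets of disjoint sets are disjoint. Be aware, though, that your argument, like the paper's, is only as sound as \cref{lem:shrinkage}, and that lemma fails: its triangle-inequality step needs $d(x_t(a_i),v)\le d(x_{t+1}(a_i),v)+p^{a_i}(x_t(a_i))$, which is false when $a_i$ sits at its goal at time $t$ (so $p^{a_i}(x_t(a_i))=0$) and then steps aside, as the following example shows: on the path $b$--$g$--$c$ plus a vertex $u$ adjacent to $g$, with $a_i$ parked at its goal $g$ and $a_j$ going from $b$ to $c$ under the certificate $a_i\colon[g,u,g]$, $a_j\colon[b,g,c]$, one gets $\calB_t=3$, $\calS_t=1$, $\calR^{\calB_t}(a_i)=\{g\}$, but $\calB_{t+1}=2$, $\calS_{t+1}=0$, $\calR^{\calB_{t+1}}(a_i)=\{u,g\}$, and adding a third agent that crosses $u$ later (e.g.\ along $w_3$--$w_2$--$w_1$--$u$--$y$ on branches attached at $u$) gives a group that is independent of $\{a_i,a_j\}$ at $t$ but not at $t+1$, so the corollary itself fails as stated.
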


\begin{proof}
Let $A_{k_1}$ and $A_{k_2}$ be two groups that are independent at time $t$. Then, for every $a_i\in A_{k_1}$ and $a_j\in A_{k_2}$, their budget-limited reachable regions do not intersect: $\calR^{\calB_t}(a_i) \cap \calR^{\calB_t}(a_j) = \emptyset$.
By \cref{lem:shrinkage}, for any $t' > t$, the reachable regions shrink, meaning~$\calR^{\calB_{t'}}(a_i) \subseteq \calR^{\calB_t}(a_i)$ and $\calR^{\calB_{t'}}(a_j) \subseteq \calR^{\calB_t}(a_j)$.
Since subsets of disjoint sets are themselves disjoint, $\calR^{\calB_{t'}}(a_i) \cap \calR^{\calB_{t'}}(a_j) = \emptyset$ necessarily holds.
Therefore, the groups remain independent at time $t'$, proving the factorization is globally inheritable.
\end{proof}

\section{\gls{acr:daccbs} Algorithm}

\makeatletter
\xpatchcmd{\algorithmic}{\itemsep\z@}{\itemsep=1pt}{}{}
\makeatother

\begin{algorithm}[t]
\resizebox{\columnwidth}{!}{
\begin{minipage}{\columnwidth}
    \small
    \caption{\gls{acr:daccbs} algorithm}
    \label{alg:daccbs}
    \begin{algorithmic}[1]
    \Require {Current instance $\calI$, current state $x_t$, set of groups with last factorization slackness $\calG^t=\{A_0, \ldots, A_K\}$, $\calS_{\mathrm{last}}=\{(A_0, \calS_{\mathrm{last}}^{A_0}),\ldots,(A_K, \calS_{\mathrm{last}}^{A_K})\}$, each group $A_k$'s previous certificate trajectories$\vtau_{\mathrm{cert}}^{A^k,t-1}$ and the fleet budget $\calB^{A_k}_{t-1}$, the backup controller $\pi_{\mathrm{bk}}$ nominal maximal horizon size $H_\max$, time limit $t_\max$, slackness drop threshold $\calT_{\calS}$}

    \ForAll{$A_k \in \calG^t$} \Comment{Parallel individual group planning}

        \LComment{\textbf{Step \circled{1}: Update certificate and budget}}
        \State $\vtau_{\mathrm{cert}}^{A_k,t}, \calB^{A_k}_t \gets \Call{UpdateCertBudget}{\calI, \vtau_{\mathrm{cert}}^{A_k,t-1}, \calB^{A_k}_{t-1}}$

        \LComment{\textbf{Step \circled{2}: Constraint tree initialization}}
        \State $h_{\mathrm{r}}=1$; $\mathrm{OPEN} \gets \mathsf{Empty Queue}$

        \State $N_{\mathrm{root}} \gets \Call{RootNodeGen}{\calI,x_t,H_{\max}}$
        \State $\mathrm{OPEN}.\textbf{insert}(N_{\mathrm{root}})$

        \LComment{\textbf{Step \circled{3}: Constraint tree construction}}

        \While{$\mathrm{OPEN} \neq \mathsf{EmptyQueue}$}
            \If{$\Call{TimeOut}{t_\max}$}
                \State \textbf{break}
            \EndIf
            \State $n \gets \mathrm{OPEN}.\mathbf{dequeue}()$
            \State $\mathrm{Conflict},a_i,a_j \gets \Call{FindConf}{n.\vtau_{\mathrm{finite}}^{A,H_\max},h_{\mathrm{r}}}$
            \If{$\mathrm{Conflict}=\emptyset$}
                \LComment{\textbf{\circled{4}: Increment $h_r$ and update certificates}}
                \State $\vtau^{A_k}_{\mathrm{tail}} \gets \Call{BKRollout}{n.\vtau_{\mathrm{finite}}^{A,H_\max},h_{\mathrm{r}},\pi_{\mathrm{bk}}}$
                \If{$\calC(\Call{Concat}{n.\vtau_{\mathrm{finite}}^{A,H_\max}[0:h_r],\vtau^{A_k}_{\mathrm{tail}}}) < \calB^{A_k}_t$}
                    \State $\vtau_{\mathrm{cert}}^{A_k,t} \gets \Call{Concat}{n.\vtau_{\mathrm{finite}}^{A,H_\max}[0:h_r],\vtau^{A_k}_{\mathrm{tail}}}$
                    \State $\calB^{A_k}_t \gets \calC(\vtau_{\mathrm{cert}}^{A_k,t})$
                \EndIf
                \While{$\mathrm{Conflict}=\emptyset \ \textbf{and} \ h_{\mathrm{r}} < H_\max$}
                    \State $h_{\mathrm{r}} \gets h_{\mathrm{r}}+1$
                    \State $\mathrm{Conflict},a_i,a_j \gets  \Call{FindConf}{n.\vtau_{\mathrm{finite}}^{A,H_\max},h_{\mathrm{r}}}$
                \EndWhile
                \If{$h_{\mathrm{r}}=H_\max$}
                    \State \textbf{break}
                \EndIf
            \EndIf
            \State $n_{\mathrm{new}}^{a_i},n_{\mathrm{new}}^{a_j} \gets \Call{ChildNodeGen}{\calI,x_t,\mathrm{Conf}}$
            \State $\mathrm{OPEN}.\mathbf{insert}(n_{\mathrm{new}}^{a_i},n_{\mathrm{new}}^{a_j})$
        \EndWhile
        \LComment{\textbf{\circled{5}: Slackness based further factorization}}
        \State $\calS_t \gets \Call{ComputeSlackness}{\calI, A_k, x_t}$
        \If{$\calS_{\mathrm{last}}(A_k) - \calS_t \geq \calT_{\calS}$}
            \State $\calG_k, \calS_{\mathrm{last}} \gets \Call{BLFact}{\calI, A_k, \calS_t}$
        \EndIf
    \EndFor

    \State $\calG \gets \Call{Concat}{\calG_0, \ldots, \calG_K}$
    \State \Return $ \Call{ExtractFirstStepMovement}{\vtau_{\mathrm{cert}}^{A_0},\ldots,\vtau_{\mathrm{cert}}^{A_K}}$
    \end{algorithmic}
\end{minipage}
}
\end{algorithm}

Combining the certificates implementation (\cref{sec:certificates}) and the budget-limited factorization mechanism (\cref{sec:factorization}) with the \gls{acr:accbs} algorithm~\cite{li2026adaptive} yields the \gls{acr:daccbs} algorithm (\cref{alg:daccbs}).
In this section, we formally present the algorithm and its theoretical properties.

\subsection{Details of the \gls{acr:daccbs} algorithm}
At a single control step $t$ of the \gls{acr:mapf} system, similar to the \gls{acr:accbs} algorithm, \gls{acr:daccbs} builds a constraint tree over~$H_{\max}$, but enforces conflict-freedom only within the active prefix of length~$h_r$.
The running horizon starts at~$h_r=1$  and grows as long as time allows, and conflicts can be resolved.
Besides these processes in \gls{acr:accbs}, \gls{acr:daccbs} consistently maintains the certificate trajectories and the corresponding fleet budget across timesteps and updates the certificates as per~\cref{sec:updating-certificates} whenever a conflict-free active prefix is found.
At any time, \gls{acr:daccbs} can return the first-step movement corresponding to the current certificate trajectories.

\myparagraph{\circled{1} Initialization and inheritance} --
\gls{acr:daccbs} inherits the factorization result from the previous time step, allowing all agent groups to be processed in parallel.
For each agent group~$A_k$, a set of certificate trajectories~$\vtau^{A_k,t-1}_{\mathrm{cert}}$ and its associated cost, the fleet budget~$\calB^{A_k}_{t-1}$, are inherited and updated according to the across timesteps update in~\cref{sec:updating-certificates}:
\begin{align}
    \textstyle &\vtau^{A_k,t}_{\mathrm{cert}} = \{[v_{\mathrm{cert}}^{a_i,t\mid t-1},\ldots,v_{\mathrm{cert}}^{a_i,M_t\mid t-1}]\mid a_i\in A_k\} \\
    \textstyle &\calB^{A_k}_t = \textstyle \calB^{A_k}_{t-1} - \sum_{a_i}p^{a_i}(v_{t-1}^{a_i}) \label{eq:budget-update}
\end{align}

In the subsequent steps, \gls{acr:daccbs} aims to find an improved set of certificate trajectories with a lower cost, reducing the budget accordingly.

\myparagraph{\circled{2}, \circled{3} Constraint tree initialization and construction} --
At the beginning of each time step, the running horizon~$h_r$ is initialized to 1.
Let~$\mathcal{T}_1$ denote the corresponding constraint tree for~$h_r=1$.
$\mathcal{T}_1$ is initialized with a root node containing an empty constraint set and a set of individually optimal~$H_{\max}$-step trajectories for all agents.
The main loop performs a best-first search over the constraint tree based on node costs, dynamically extending the running horizon whenever a node with a conflict-free active prefix is found.
In each iteration, the node~$n$ with the lowest cost is extracted, and its active prefix~$\vtau^{A^t,h_r\mid H_\max}_{\mathrm{finite}}(n)$ is checked for conflicts.
If a conflict is detected, two child nodes are generated by adding a new constraint to the constraint set for each of the involved agents.
Their trajectories are then replanned subject to the updated constraint sets, mirroring the standard \gls{acr:cbs} algorithm (\cref{sec:preliminary-cbs}).
Alternatively, if no conflicts exist within the active prefix of~$n$, the algorithm moves to the following step.

\myparagraph{\circled{4} Running horizon increment and certificate update} --
When a node with a conflict-free active prefix is found, the algorithm attempts to extend $h_r$. We can show that the node cost remains invariant under this extension~\cite[Lemma~III.1]{li2026adaptive}, which directly leads to the full reusability of the constraint tree~\cite[Proposition~III.1]{li2026adaptive} which eliminates duplicated computation.

In addition to extending the running horizon, finding a node with a conflict-free active prefix triggers a within-timestep update (\cref{sec:updating-certificates}) on the certificate trajectories and the fleet budget, where the candidate certificate trajectories are constructed by concatenating the active prefix with the suboptimal conflict-free tail (\cref{def:suboptimal-cf-tail}). Incumbent certificates are updated once better certificates are found.
Within a preset time limit, the main loop repeatedly attempts to update the certificates and extend the running horizon.

\myparagraph{\circled{5} Budget-limited factorization } --
The budget-limited factorization is attempted once the certificate update is concluded.
As noted in~\cref{rem:slackness-freedom} and~\cref{lem:shrinkage}, the fleet's slackness dictates the size of the budget-limited reachable region, and a decrease in slackness shrinks these regions and potentially leads to a finer factorization.
To balance factorization sensitivity and computational burden, \gls{acr:daccbs} introduces a slackness drop threshold, denoted as~$\calT_{\calS}$.
If the current slackness drops by more than~$\calT_{\calS}$ relative to the value at the previous factorization step, \gls{acr:daccbs} recomputes the budget-limited reachable regions (\cref{def:budget-limit-reachable-region}), enabling a formal factorization via disjoint-set data structures (DSU)~\cite{cormen2022introduction}.
Thanks to the inheritability property (\cref{cor:inheritability}), these newly formed independent groups are pooled and seamlessly carried over to future time steps.

\subsection{Theoretical analysis} \label{sec:theoretical-analysis}
To evaluate solution quality, we adopt the standard \gls{acr:soc} metric, which can be interpreted as total energy consumption~\cite{surynek2016empirical}.

\begin{definition}[\gls{acr:soc}]
Let~$\vtau^{A}_{\mathrm{sol}} = \{\traj^{a_1}_{\mathrm{sol}},\ldots,\traj^{a_N}_{\mathrm{sol}}\}$ be solution trajectories of the one-shot \gls{acr:mapf} problem, where~$\traj_{\mathrm{sol}}^{a_i}=[v^{a_i}_0, \ldots, v^{a_i}_M]$ is the trajectory of agent~$a_i$.
The \gls{acr:soc} is
    \[ \textstyle
        \mathrm{SOC}(\vtau^{A}_{\mathrm{sol}})
        = \sum_{a_i} \mathrm{SOC}(\traj_{\mathrm{sol}}^{a_i})
        = \sum_{a_i} \sum_t p^{a_i}(v^{a_i}_t),
    \]
    where~$p^{a_i}$ is the running cost~\cref{def:cost-function}.
\end{definition}

We consider an idealized case where (i)~$H_{\max}$ upper-bounds the makespan of some \gls{acr:soc}-optimal solution, and (ii) the time for running horizon extension~$t_{\max}$ is infinite.
Then $h_r$ increases monotonically until it reaches~$H_{\max}$, meaning the algorithm never terminates prematurely.

In this case, the following result can be established.

\begin{lemma}[\gls{acr:soc} optimality of \gls{acr:daccbs}] \label{lem:optimality}
    The \gls{acr:daccbs} algorithm terminates in finite time and returns a conflict-free joint trajectory of length~$H_{\max}$ that is globally optimal under the stated choice of~$H_{\max}$ and infinite computation time.
\end{lemma}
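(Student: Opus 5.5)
The plan is to reduce the statement to the classical optimality argument for \gls{acr:cbs}, and to use the certificate machinery only to show that the optimal plan is what gets executed. Fix one group $A_k$; by \cref{cor:inheritability} groups never interact, and SOC is additive over groups, so it suffices to argue per group.

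\textbf{Step 1: termination of a single timestep.} With $t_{\max}=\infty$, the best-first search either finds a node whose active prefix is conflict-free at $h_r=H_{\max}$, or keeps branching. By the cost invariance~\cite[Lemma~III.1]{li2026adaptive}, extending $h_r$ does not change node costs, so the tree is reused~\cite[Proposition~III.1]{li2026adaptive}. The search at horizon $H_{\max}$ is therefore ordinary \gls{acr:cbs} restricted to $H_{\max}$-step trajectories. Two facts make it finite. Constraints only concern times in $\{0,\ldots,H_{\max}\}$ and finitely many vertices and edges, so every branch has bounded depth. Assumption (i) guarantees that a solution exists within $H_{\max}$ steps, so the standard \gls{acr:cbs} completeness argument applies. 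Hence the search reaches a conflict-free node $n^\star$ with $h_r=H_{\max}$.

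\textbf{Step 2: optimality of $n^\star$.} At $h_r=H_{\max}$, the node cost in \cref{def:cost-function} equals the SOC of the full trajectories plus a terminal cost $\gamma(v_{H_{\max}},a_i)$. Any optimal solution of makespan at most $H_{\max}$, padded with waits, has zero terminal cost. I would then invoke the standard \gls{acr:cbs} lemma: every optimal conflict-free solution is consistent with at least one open node, and node costs lower-bound the cost of every consistent solution. Because nodes are dequeued in cost order, the first conflict-free node $n^\star$ satisfies $J_{H_{\max}}(n^\star)\le \mathrm{SOC}^\star$. Since $n^\star$ is itself feasible, equality holds and its terminal costs vanish, so all agents are at their goals by $H_{\max}$.

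\textbf{Step 3: adoption and execution.} When $n^\star$ is found, step \circled{4} forms a candidate whose tail from goals is trivial, so its cost is $\mathrm{SOC}^\star$. By \cref{lem:budget-limit-reachability} and the feasibility of $\mathcal{B}_t$, no certificate can cost less than optimal. So either the candidate is adopted, or the incumbent already has cost $\mathrm{SOC}^\star$; I would treat the tie case by noting that the incumbent is then also optimal. After that, every across-timestep update keeps the budget with equality by \cref{eq:budget-update}. Any later within-timestep update would need strictly lower cost, which is impossible. The executed trajectory is therefore exactly the optimal certificate, giving total executed SOC $=\mathrm{SOC}^\star$. Global termination then follows from \cref{lem:strict-decrease}.

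I expect the main obstacle to be Step 2, where the finite-horizon tree, with its constraints restricted to the active prefix and its reuse across $h_r$, must be shown to satisfy the \gls{acr:cbs} invariant. The issue is whether constraints added at smaller $h_r$ still leave some optimal solution consistent with an open node. I would first handle this by noting that every \gls{acr:cbs} split excludes no conflict-free joint plan, whatever the horizon at which the split happened. Those splits are only ever made on conflicts inside the current prefix, hence inside $H_{\max}$.
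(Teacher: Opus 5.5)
Your Steps 1--2 follow the paper's reduction (cost invariance and tree reuse from ACCBS, then CBS at $h_r=H_{\max}$), and your handling of splits made at smaller $h_r$ is sound. The argument breaks at the end of Step 2: ``since $n^\star$ is itself feasible, equality holds and its terminal costs vanish.'' You only know that $n^\star$ is conflict-free on $[0,H_{\max}]$. Nothing in the low-level planner or in the node cost of \cref{def:cost-function} forces agents to be at their goals at time $H_{\max}$. The terminal term $\gamma$ also ignores the other agents. So Steps 1--2 establish optimality of $n^\star$ for the truncated objective $J_{H_{\max}}$, not that $n^\star$ solves \cref{prob:one-shot-mapf}, and $J_{H_{\max}}(n^\star)\le\mathrm{SOC}^\star$ can be strict with positive terminal cost.

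Concretely, take vertices $h,p,q,c_1,c_2,c_3$ with undirected edges $\{h,p\},\{h,q\},\{h,c_1\},\{c_1,c_2\},\{c_2,c_3\}$ plus self-loops. Agent $a$ goes from $h$ to $c_3$, and agent $b$ starts and ends at $c_2$. Since the corridor $c_1,c_2,c_3$ is a dead end, $b$ must exit into a pocket to let $a$ pass. The optimum is $\mathrm{SOC}^\star=11$ (e.g.\ $a\colon h,p,p,h,c_1,c_2,c_3$ and $b\colon c_2,c_1,h,q,h,c_1,c_2$) with makespan $6$, so $H_{\max}=6$ satisfies (i). Yet the plan in which $b$ stays at $c_2$ and $a$ moves to $c_1$ and waits there is conflict-free on $[0,6]$ with cost $6+\gamma(c_1,a)=8<11$. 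By the very invariant you invoke, the first conflict-free node at $h_r=H_{\max}$ costs at most $8$, so it leaves $a$ blocked. Step 3's ``trivial tail of cost $\mathrm{SOC}^\star$'' then has nothing to stand on.

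The paper's proof does not close this either. It asserts that at $h_r=H_{\max}$ the search is identical to standard CBS, and that is exactly the identification that fails: (i) only guarantees that \emph{some} optimal plan fits in the window, not that cheaper non-goal-reaching truncated plans are excluded. To make your argument (and the lemma) go through, you need an extra hypothesis that makes goal arrival at time $H_{\max}$ a hard constraint, e.g.\ infinite terminal cost off the goal at $h_r=H_{\max}$, or a goal constraint in the low-level planner. With that hypothesis, your Steps 1--2 are a more careful version of the paper's reduction.

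Two smaller points on Step 3, which goes beyond what the paper proves:
\begin{itemize}
\item In Algorithm~\ref{alg:daccbs} the certificate update uses the $h_r$ at which a node is first found conflict-free, and the loop breaks as soon as $h_r$ reaches $H_{\max}$. A candidate built from the full $H_{\max}$-prefix of $n^\star$ therefore requires modifying step~4.
\item Once $\calB_0=\mathrm{SOC}^\star$, telescoping \cref{eq:budget-update-across-timesteps} bounds the executed SOC by $\calB_0$ directly, so you do not need to argue that later updates are impossible. Relatedly, the fact that no certificate costs less than $\mathrm{SOC}^\star$ is just feasibility of any certificate, not a consequence of \cref{lem:budget-limit-reachability}.
\end{itemize}
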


\begin{proof}
    Given $t_{\max} \to \infty$, the algorithm monotonically extends the running horizon without premature termination until~$h_r = H_{\max}$.
    By~\cite[Proposition~III.1]{li2026adaptive}, the constraint tree is strictly preserved across extensions.
    By~\cite[Lemma~III.1]{li2026adaptive}, the node cost~$J_{h_r}(n)$ is invariant and strictly equals the full trajectory cost~$J_{H_{\max}}(n)$.

    At~$h_r = H_{\max}$, the search space and constraint resolution mechanisms are identical to standard \gls{acr:cbs} over the entire spatial-temporal domain. Since standard \gls{acr:cbs} is complete and optimal for the \gls{acr:soc} objective, and $H_{\max}$ upper-bounds the optimal makespan, the first conflict-free joint trajectory extracted guarantees global optimality.
\end{proof}

Compared with~\gls{acr:accbs}~\cite{li2026adaptive}, the certificate mechanism enables \gls{acr:daccbs} to directly link the closed-loop solution quality with the allocated computational budget per step, as discussed in the following remark.

\begin{remark}[Monotone improvement of the \gls{acr:soc} upper bound] \label{rem:improving-quality}
    A longer $t_{\max}$ corresponds to a longer running horizon that the algorithm can extend, resulting in more attempts to update the certificate.
    Unlike \gls{acr:accbs} where $t_{\max}$ is not linked to \gls{acr:soc} performance~\cite[Remark~III.4]{li2026adaptive}, in \gls{acr:daccbs}, the fleet budget upper-bounds the \gls{acr:soc} performance, and decreases monotonically as $t_{\max}$ increases.
\end{remark}

\section{Experiments}  \label{sec:experiments}
We evaluate three questions.
First, does the certificate mechanism improve the closed-loop solution quality of \gls{acr:daccbs} relative to \gls{acr:accbs}?
Second, does budget-limited factorization expose practically useful compositional structure? Third, how sensitive is the framework to the choice of backup controller?
All experiments were run on a 2023 MacBook Pro with a 12-core CPU and 36~GB of RAM.
Planning across groups is parallelized over 12 threads.
Unless otherwise stated, LaCAM~\cite{okumura2023lacam} is used as the backup controller for certificate generation, and all maps are taken from the \gls{acr:mapf} benchmark~\cite{stern2019mapf}.

\myparagraph{\gls{acr:soc} performance comparison} --
\Cref{fig:one-shot-exp} compares \gls{acr:daccbs}, \gls{acr:accbs}, and LaCAM under different per-step planning budgets.
The vertical axis reports the average \gls{acr:soc} increment, so lower values indicate better performance.
Across all tested maps, \gls{acr:daccbs} is consistently competitive with or better than \gls{acr:accbs}, and the advantage becomes more pronounced as the occupancy rate increases.
This effect is especially visible on the random maps, where \gls{acr:accbs} degrades rapidly in dense regimes while \gls{acr:daccbs} remains substantially lower.

A second trend is that \gls{acr:daccbs} improves more predictably as additional planning time is provided.
Its curves decrease steadily as the per-step computational budget increases, whereas \gls{acr:accbs} does not exhibit the same monotone behavior.
This is consistent with the role of certificates: additional computation is used to tighten a feasible incumbent certificate rather than to commit to a potentially unstable finite-horizon plan.
The theory guarantees monotonic improvement of the certified upper bound (\cref{rem:improving-quality}); empirically, the results indicate that this also translates into more stable realized closed-loop performance.

\begin{figure}[tb]
    \centering
    \subfloat[\href{https://movingai.com/benchmarks/mapf/empty-8-8.png}{\texttt{Empty Map 1}}]{\includegraphics[width=0.5\linewidth]{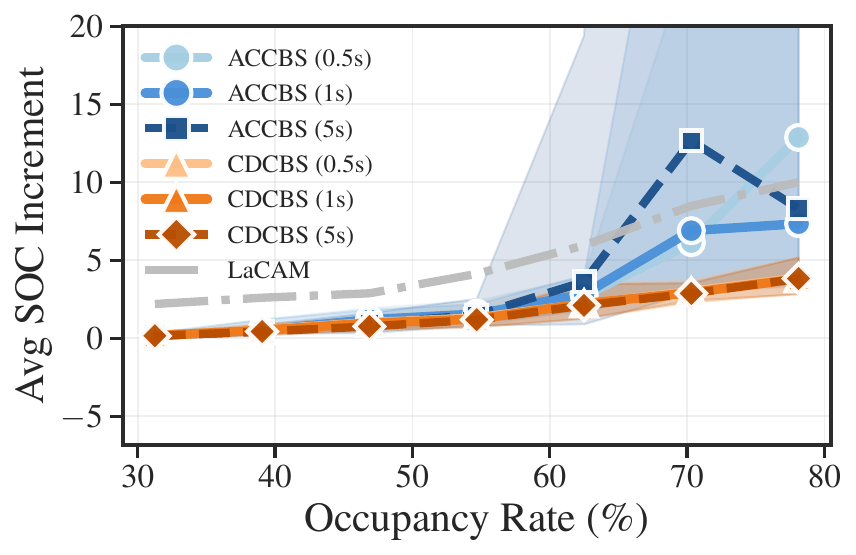}} \hfill
    \subfloat[\href{https://movingai.com/benchmarks/mapf/random-32-32-10.png}{\texttt{Random Map 1}}]{\includegraphics[width=0.5\linewidth]{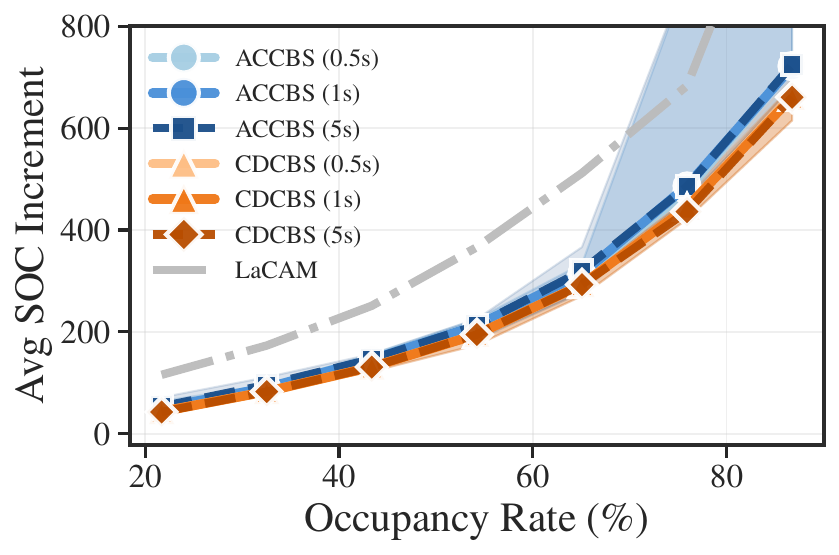}}
    \\
    \subfloat[\href{https://movingai.com/benchmarks/mapf/empty-48-48.png}{\texttt{Empty Map 2}}]{\includegraphics[width=0.5\linewidth]{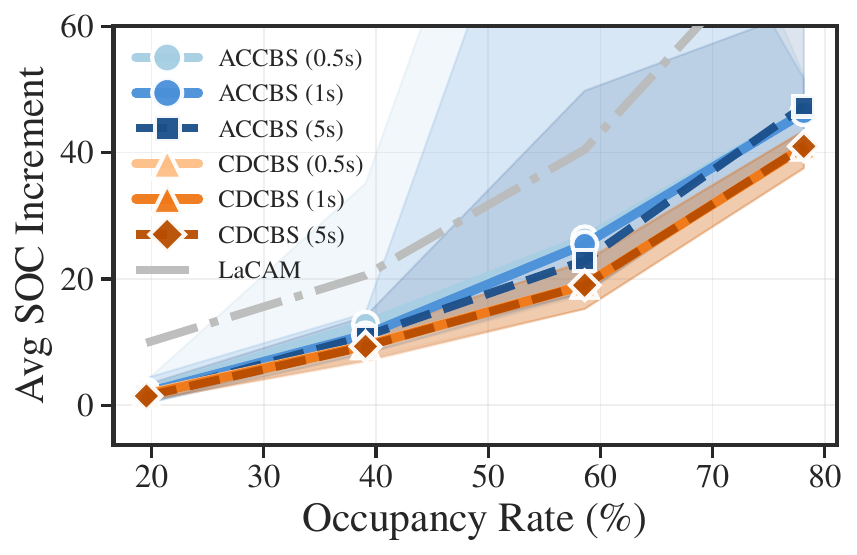}} \hfill
    \subfloat[\href{https://movingai.com/benchmarks/mapf/random-64-64-10.png}{\texttt{Random Map 2}}]{\includegraphics[width=0.5\linewidth]{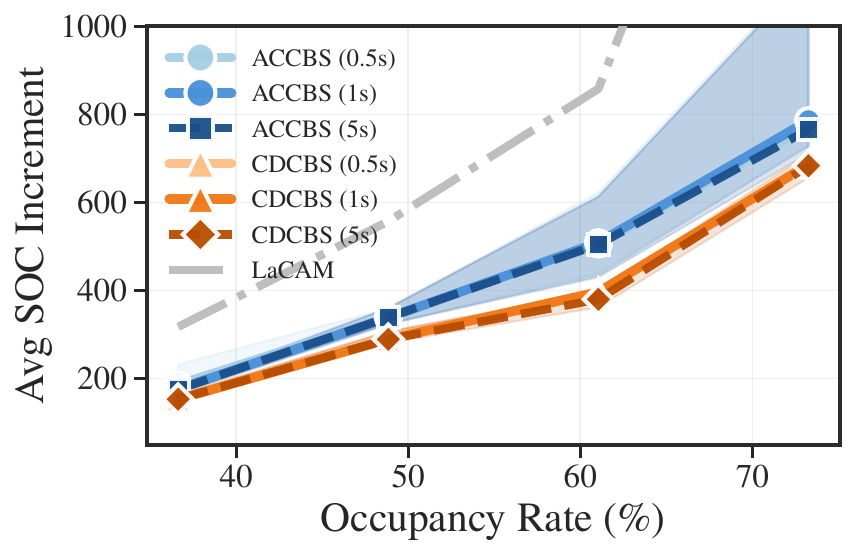}}

    \caption{With the certificates, the \gls{acr:daccbs} can achieve significantly better performance than~\gls{acr:accbs}, which does not have the certificate, especially in denser cases. }
    \label{fig:one-shot-exp}
\end{figure}

\begin{table}[tb]
    \centering
    \caption{Factorization results at $t=\lceil \text{makespan}/2 \rceil$}
    \label{tab:factorization}

    \begin{minipage}[t]{0.47\linewidth}
        \centering
        \parbox[c][1.2em][c]{\textwidth}{\centering\textbf{\href{https://movingai.com/benchmarks/mapf/empty-48-48.png}{\texttt{Empty Map 2}}}}
        \vspace{0.5em}
        \begin{tabular}{@{}ccc@{}}
            \toprule
            \textbf{\ $N$} & $K$ & \textbf{${\max|A_{\mathrm{k}}|}/|A|$} \\ [0.3em]
            \textbf{} & \textbf{(Groups} & \textbf{(Computation} \\
            \textbf{} & \textbf{count)} & \textbf{reduction)} \\
            \midrule
            10 & 6 & 0.20  \\
            30 & 8 & 0.47  \\
            50 & 6 & 0.44 \\
            70 & 24 & 0.36 \\
            200 & 1 & 1.00 \\
            \bottomrule
        \end{tabular}
    \end{minipage}
    \begin{minipage}[t]{0.47\linewidth}
        \centering
        \parbox[c][1.2em][c]{\textwidth}{\centering\textbf{\href{https://movingai.com/benchmarks/mapf/random-32-32-20.png}{\texttt{Random Map 2}}}}
        \vspace{0.5em}
        \begin{tabular}{@{}ccc@{}}
            \toprule
            \textbf{\ $N$} & $K$ & \textbf{${\max|A_{\mathrm{k}}|}/|A|$} \\ [0.3em]
            \textbf{} & \textbf{(Groups} & \textbf{(Computation} \\
            \textbf{} & \textbf{count)} & \textbf{reduction)} \\
            \midrule
            10 & 9 & 0.20  \\
            30 & 26 & 0.10  \\
            50 & 27 & 0.36 \\
            70 & 21 & 0.35 \\
            200 & 1 & 1.00 \\
            \bottomrule
        \end{tabular}
    \end{minipage}
\end{table}

\myparagraph{Budget-limited factorization} --
\gls{acr:daccbs} uses budget-limited factorization to partition the fleet into independent groups that can be inherited across future timesteps.
\Cref{tab:factorization} summarizes the resulting decompositions.
The reported number of groups and the reduction in average group size show that the fleet can often be split into substantially smaller subproblems, especially under tighter slackness budgets.
This trend is consistent with the theory.
Smaller slackness yields smaller budget-limited reachable regions, making disjointness easier to certify and factorization more effective.
Although \Cref{tab:factorization} reports structural decomposition rather than wall-clock speedups, these reductions indicate meaningful potential gains when per-group planning is combined with parallel computation.

\begin{figure}[t]
    \centering
    \subfloat[\href{https://movingai.com/benchmarks/mapf/empty-16-16.png}{\texttt{Empty Map 2}}]{\includegraphics[width=0.5\linewidth]{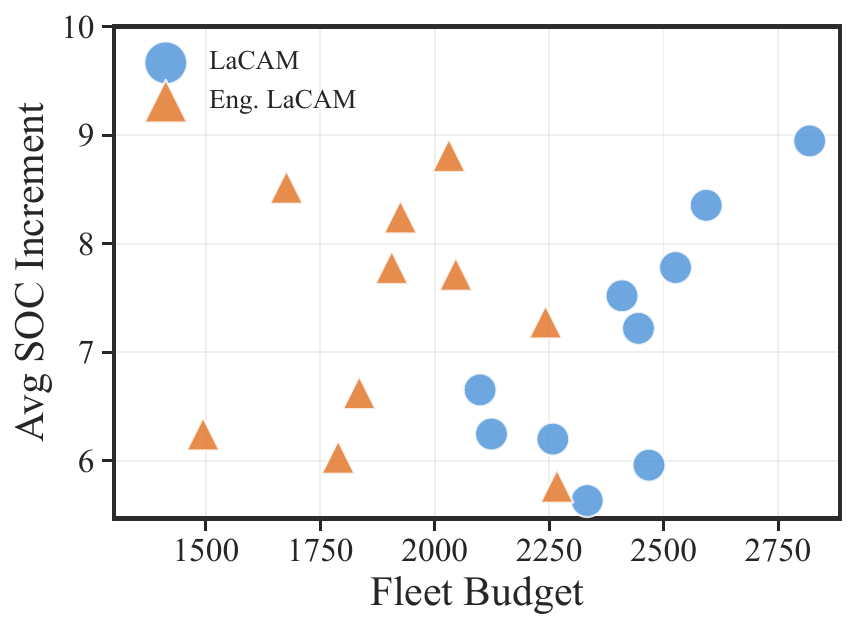}} \hfill
    \subfloat[\href{https://movingai.com/benchmarks/mapf/random-32-32-20.png}{\texttt{Random Map 2}}]{\includegraphics[width=0.5\linewidth]{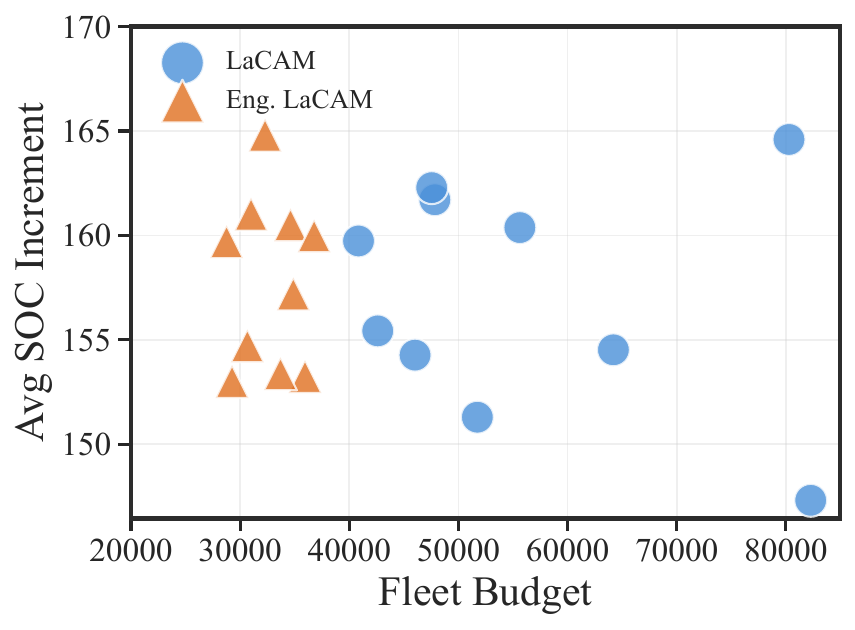}}

    \caption{Across planning budgets, certificate-guided \gls{acr:daccbs} achieves lower average \gls{acr:soc} increment than \gls{acr:accbs}, with advantages becoming most pronounced in denser instances.}
    \label{fig:backup-controller}
\end{figure}

\myparagraph{The choice of backup controller} --
The backup controller affects the quality of the certificates and therefore the tightness of the fleet budget.
\Cref{fig:backup-controller} compares vanilla LaCAM~\cite{okumura2023lacam} and engineered LaCAM~\cite{okumura2023engineering} as backup controllers.
The engineered variant typically produces tighter certificates, as reflected by lower fleet budgets, but the final closed-loop \gls{acr:soc} increments remain close to those obtained with vanilla LaCAM.
This highlights an important trade-off: improving certificate quality does not automatically translate into proportionally better end-to-end performance.
In practice, the backup controller should therefore be chosen by balancing certificate quality against the additional computation required to produce it.

\section{Conclusion and Future Work}

This paper introduced a certificate-based framework for closed-loop \gls{acr:mapf} and instantiated it on \gls{acr:accbs} to obtain \gls{acr:daccbs}.
The key idea is to maintain conflict-free certificate trajectories together with their fleet budget as an incumbent plan at every timestep, and to accept closed-loop updates only when they improve this certificate.
This restores a global notion of progress to finite-horizon closed-loop planning: monotone budget decrease yields completeness, while the same budget induces budget-limited reachable regions that support inheritable factorization.

The resulting algorithm combines the reactivity of closed-loop planning with two properties that are difficult to retain under finite-horizon approximations: a feasible fallback plan and exploitable compositional structure.
Empirically, \gls{acr:daccbs} yields more stable solution quality than \gls{acr:accbs}, particularly in dense instances, and the proposed factorization exposes substantial structure that can be leveraged for parallel planning.
The backup-controller exploration further shows that tighter certificates do not automatically imply proportionally better realized performance, which clarifies an important design trade-off in the framework.

Several directions follow naturally from this work.
One is to apply the certificate framework to closed-loop \gls{acr:mapf} algorithms beyond \gls{acr:accbs}.
Another is adaptive backup-controller selection, where certificate quality and certificate-generation cost are balanced online.
A third is a direct runtime study of inheritable factorization under different parallel budgets.
More broadly, certificates may provide a useful interface between learning-based decision-making and certified closed-loop multi-agent planning.

{
  \bibliographystyle{IEEEtran}
  \bibliography{references}
}

\end{document}